\theoremstyle{plain}
\newtheorem{thm}{\protect\theoremname}
\theoremstyle{plain}
\newtheorem{lem}[thm]{\protect\lemmaname}
\theoremstyle{plain}
\providecommand{\corollaryname}{Corollary}
\providecommand{\lemmaname}{Lemma}
\providecommand{\theoremname}{Theorem}
\newcommand{\R}{\mathbb{R}}
\newcommand{\Lbf}{L_{\mathrm{BF}}}
\newcommand{\Ldyntau}{L_{\mathrm{Dyn},\boldsymbol{\tau}}}
\newcommand{\Lq}{L_{Q}}
\newcommand{\Lqwalk}{L_{Q_{\boldsymbol{\tau}, \boldsymbol{\beta}, \nu}^{(W)}}}
\newcommand{\Llocked}{L_{\mathrm{locked}}}
\newcommand{\LMarBLR}{L_{\mathrm{MarBLR}}}
\newcommand{\Lblr}{L_{\mathrm{BLR}}}
\newcommand{\taulock}{\boldsymbol{\tau}_{\mathrm{locked}}}
\newcommand{\thetataulock}{\tilde{\theta}_{\tau_{\mathrm{locked}}}}
\newcommand{\Qtaumusigma}{Q_{\boldsymbol{\tau},\boldsymbol{\mu},\Sigma}}
\newcommand{\Qtaulocked}{Q_{\boldsymbol{\tau}_{\mathrm{locked}},\tilde{\theta}_{\tau_{\mathrm{locked}}},\epsilon^2\Sigma_{\mathrm{init}}}}
\newcommand{\Qwalk}{Q_{\boldsymbol{\tau}, \boldsymbol{\beta}, \nu}^{(W)}}
\newcommand{\Qsub}{Q^{\mathrm{sub}}}
\newcommand{\psub}{p_0^{\mathrm{sub}}}
\newcommand{\thetainit}{\theta_{\mathrm{init}}}
\newcommand{\Sigmainit}{\Sigma_{\mathrm{init}}}
\DeclareMathOperator{\KL}{KL}
\DeclareMathOperator{\E}{E}
\DeclareMathOperator{\Bernoulli}{Bernoulli}
\DeclareMathOperator{\Tr}{Tr}
\title{Bayesian logistic regression for online recalibration and revision of risk prediction models with guarantees}
\author{Jean, Romain, Alexej, Berkman}
\begin{document}

\includepdf[pages=-]{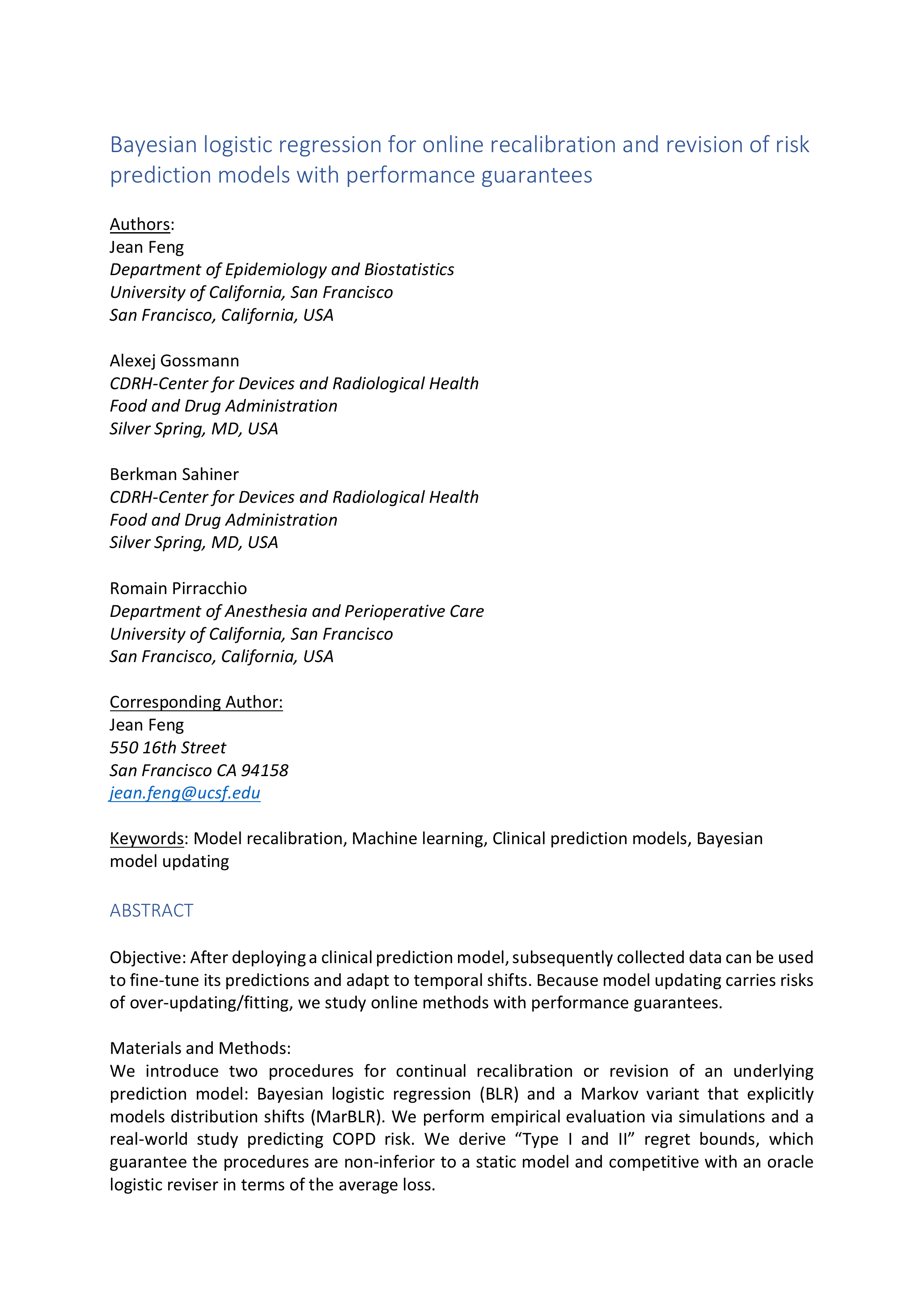}

\begin{figure}[H]
	\centering
	\includegraphics[width=0.5\linewidth]{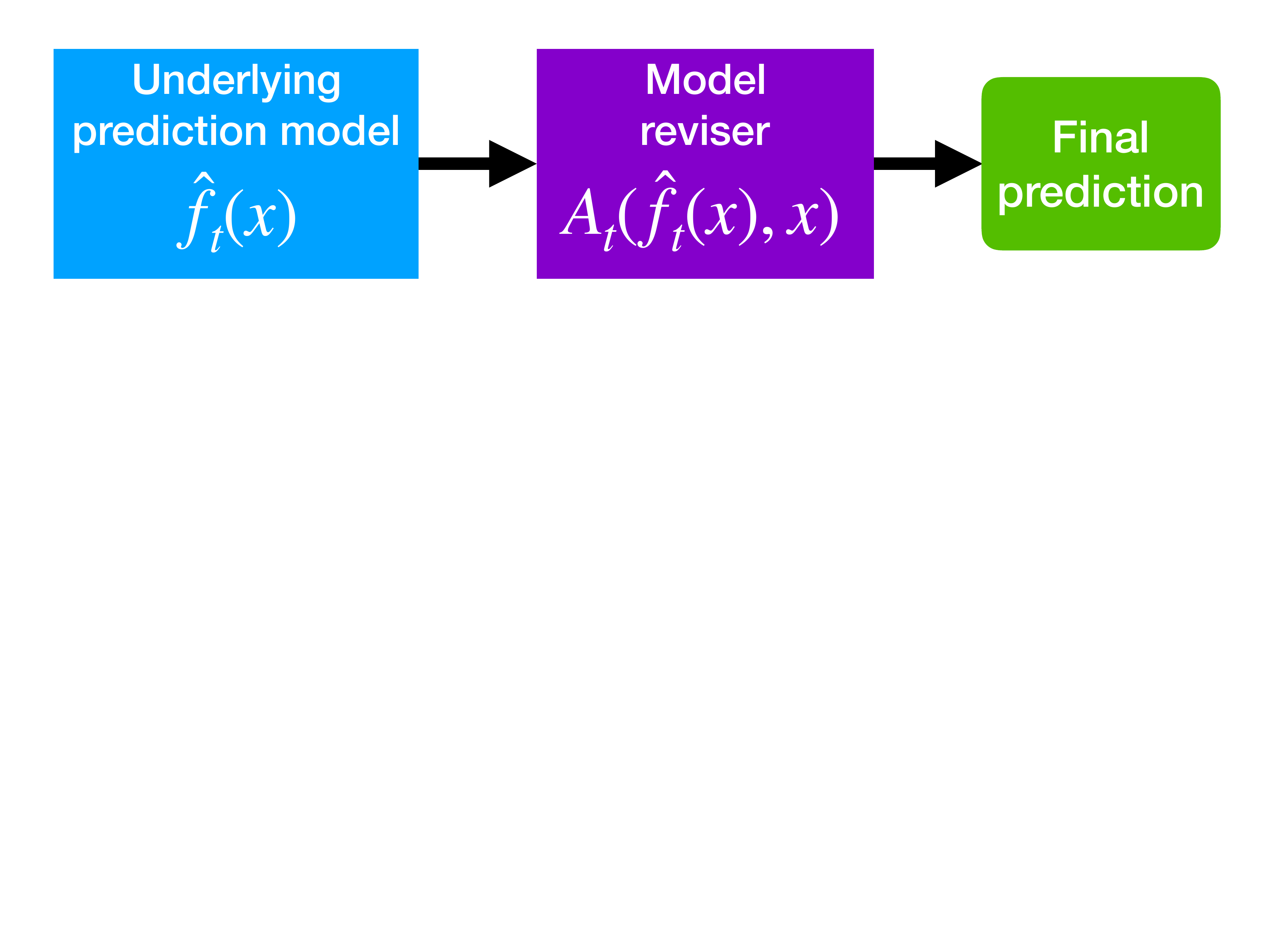}
	\vspace{-1.6in}
	\caption{
	Given a patient with variables $x$, the model reviser $\hat{A}_t$ wraps around an underlying ML model $\hat{f}_t$ to predict the true probability of having or developing a disease (or outcome).
	The focus of this work is the design of an online model reviser.
	}
	\label{fig:overview}
\end{figure}

\begin{figure}[H]
	\centering
	\begin{subfigure}{0.45\linewidth}
		\centering
	\includegraphics[width=\linewidth]{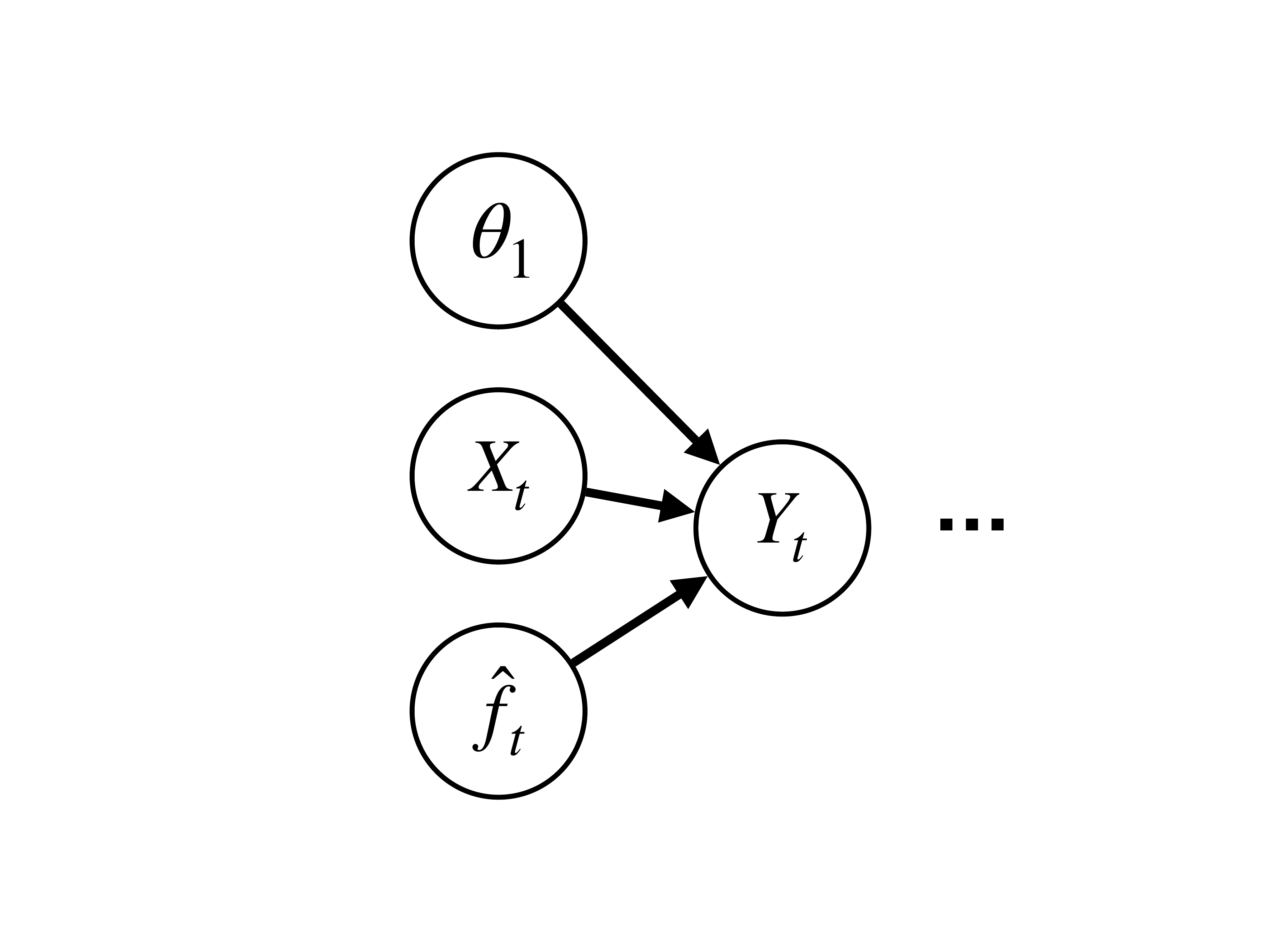}
	\caption{Bayesian logistic revision (BLR) estimates the model revision parameters $\theta_t$ for the underlying prediction model $\hat{f}_t$ with the simplifying assumption that the data $(X_t, Y_t)$ are independently and identically distributed for a constant set of model revision parameters over all time points $t = 1,...,T$.}
	\end{subfigure}
\hspace{1cm}
	\begin{subfigure}{0.45\linewidth}
		\centering
	\includegraphics[width=\linewidth]{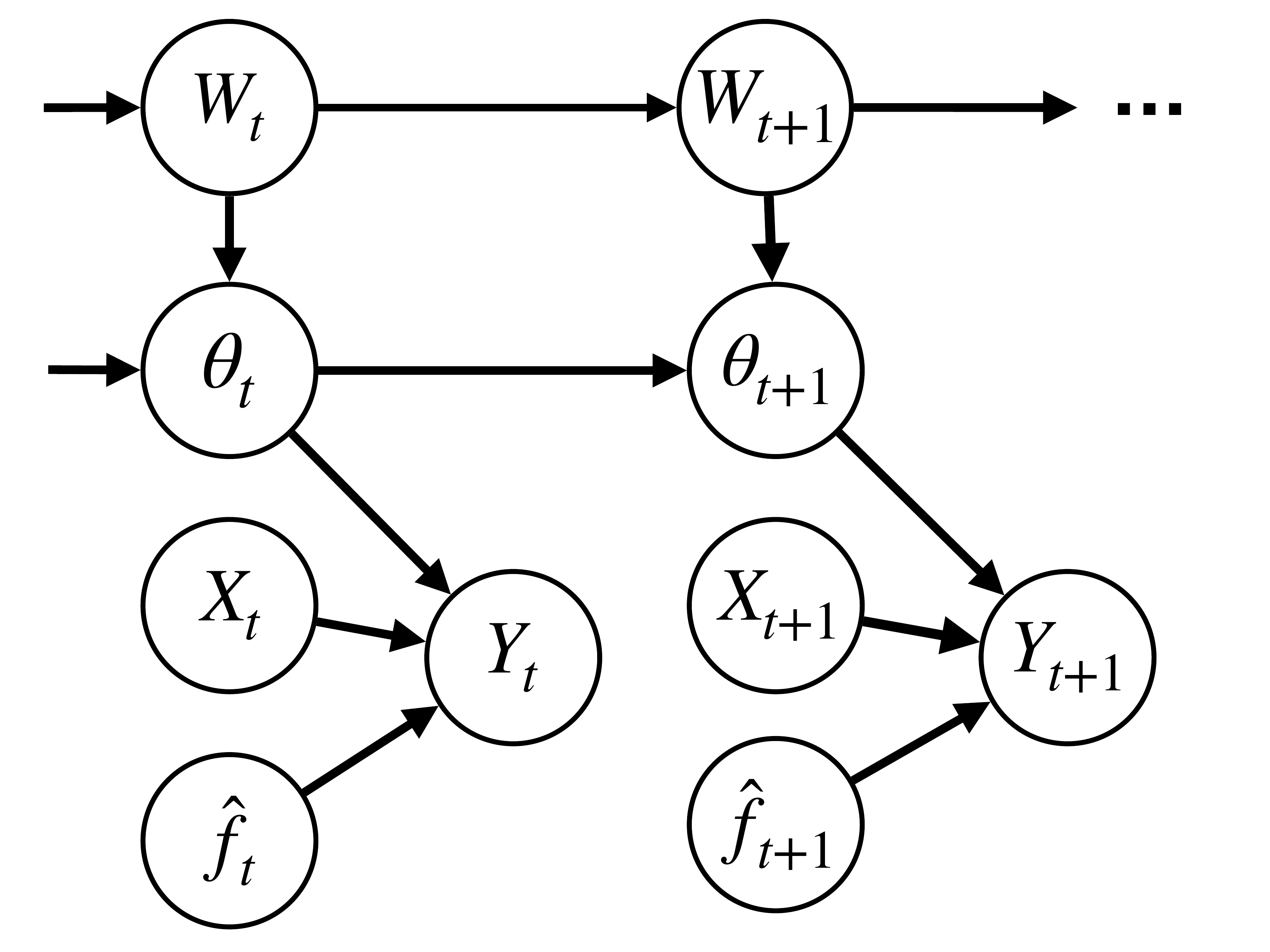}
	\caption{
		MarBLR defines a prior over revision parameter sequences that change over time. It assumes that the revision parameters change with probability $\alpha$ at each time $t$, as modeled by a binary random variable $W_t$. It supposes that changes in the model revision parameters follow a Gaussian prior centered at zero.
	}
	\end{subfigure}
	\caption{
	BLR and its Markov variant MarBLR update the deployed model revision at each time point per the evolving Bayesian posterior.
	Theoretical guarantees for BLR and MarBLR hold under misspecification of the logistic model and/or priors.
	}
	\label{fig:dynami}
\end{figure}

\begin{figure}[H]
	\centering
	\begin{subfigure}{\linewidth}
		\hspace{-0.2cm}
		\includegraphics[width=0.22\linewidth, trim=0 0.3cm 0 0]{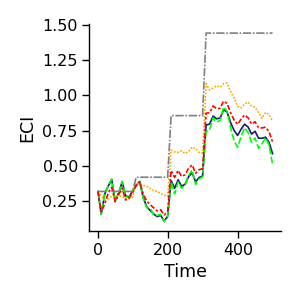}
		\hspace{-0.3cm}
		\includegraphics[width=0.66\linewidth]{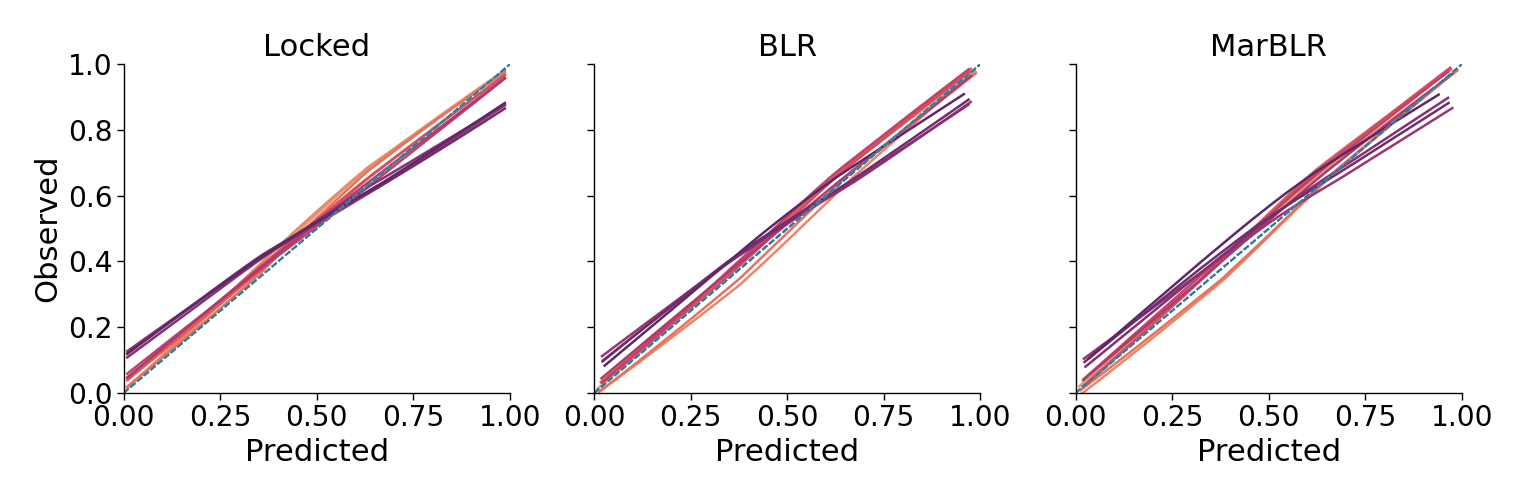}
		\includegraphics[width=0.13\linewidth]{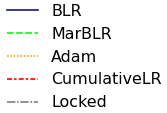}
		\caption{Univariate recalibration. Calibration measured with respect to the general population.}
		\label{fig:recalib_general_shift}
	\end{subfigure}
	\begin{subfigure}{\linewidth}
		\includegraphics[width=0.22\linewidth,valign=t]{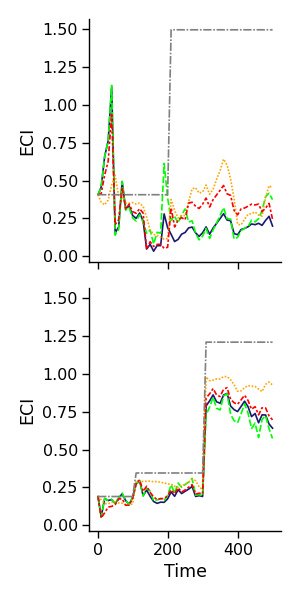}
		\hspace{-0.4cm}
		\includegraphics[width=0.66\linewidth,valign=t]{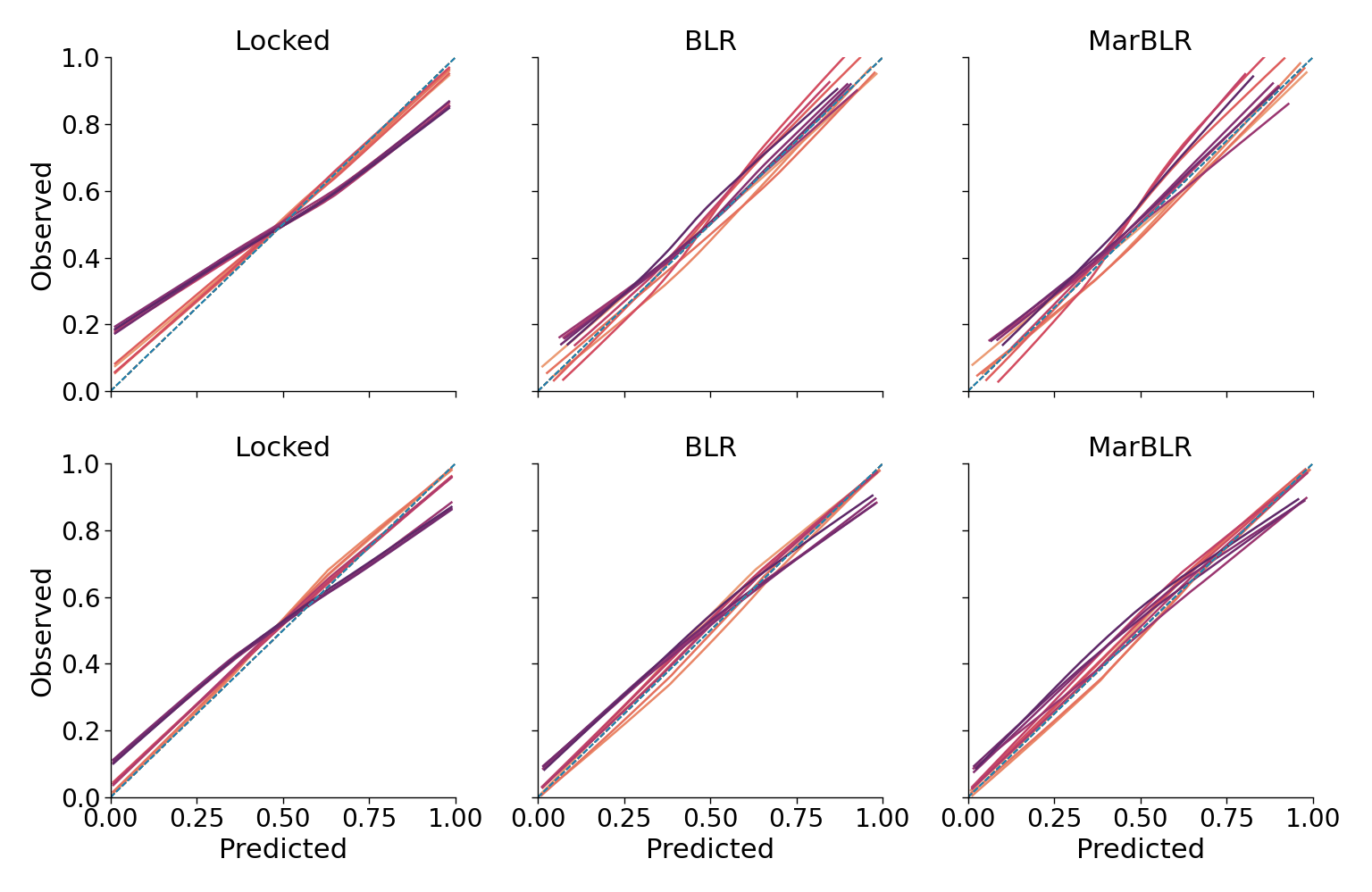}
		\hspace{-0.2cm}
		\includegraphics[width=0.07\linewidth,valign=t]{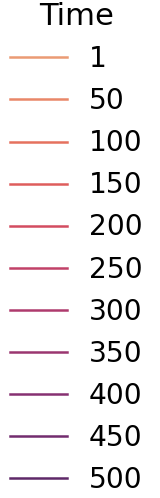}
		\caption{Subgroup-aware recalibration. Calibration measured with respect to subgroups A (top) and B (bottom).}
		\label{fig:two_pops_shift}
	\end{subfigure}
\caption{
Results from online model recalibration of a fixed underlying prediction model in a patient population with patient subgroups A and B with prevalence 20\% and 80\% (Scenario 1).
Left: Estimated calibration index (ECI) at each time point. Right: Calibration curves for the original model and the revised versions from BLR and MarBLR. The ideal calibration curve is the identity function, which has an ECI of zero.
}
	\label{fig:two_pops}
\end{figure}

\begin{figure}[H]
	\centering
		\hspace{-2.2cm}\includegraphics[width=0.5\linewidth]{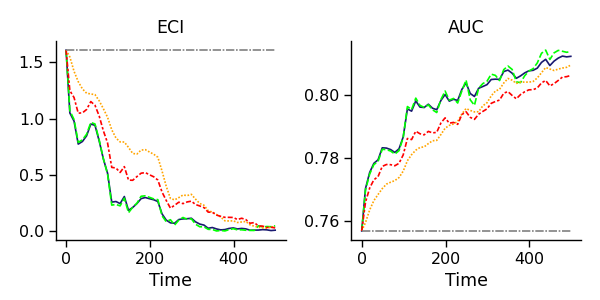}\\
		\hspace{-2.2cm}\includegraphics[width=0.5\linewidth]{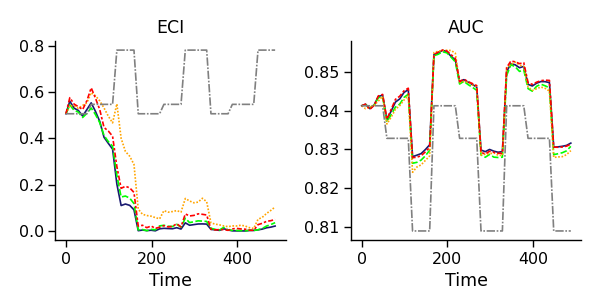}\\
		\includegraphics[width=0.5\linewidth]{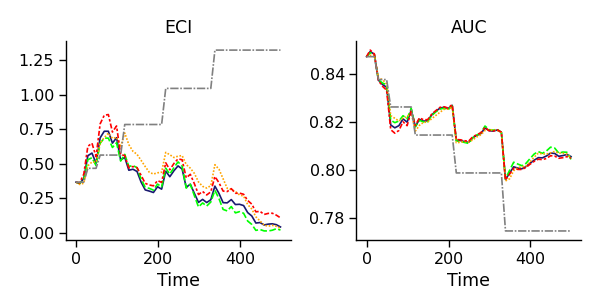}\includegraphics[width=0.15\linewidth]{legend_methods}
\caption{Results from online logistic revision of a fixed underlying model with respect to the forecasted score and ten patient variables (Scenario 2), in terms of the estimated calibration index (ECI, left) and AUC (right).
Data is simulated to be stationary over time after an initial shift (\texttt{Initial Shift}, top), shift in a cyclical fashion (\texttt{Cyclical}, middle), and shift such that the original model decays in performance over time (\texttt{Decay}, bottom). 
All online logistic revision methods outperformed locking the original model in terms of the average ECI and AUC, with BLR and MarBLR performing the best.
Note that the revised models were worse than the original model briefly in the \texttt{Decay} setting.
}
	\label{fig:dynamic_sim}
\end{figure}

\begin{figure}[H]
	\centering
	\begin{subfigure}{0.5\linewidth}
	\includegraphics[width=0.9\linewidth]{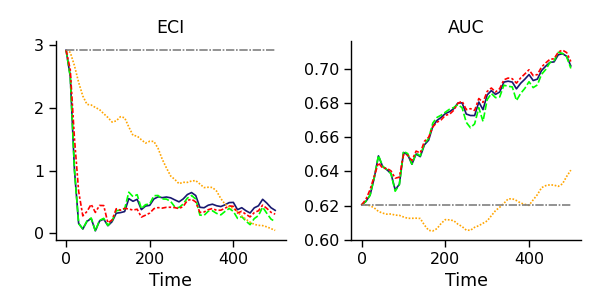}
	\caption{\texttt{Initial Shift}, \texttt{All-Refit}}
	\label{fig:iid_good}
	\end{subfigure}
\hspace{-0.5cm}
	\begin{subfigure}{0.5\linewidth}
	\includegraphics[width=0.8\linewidth]{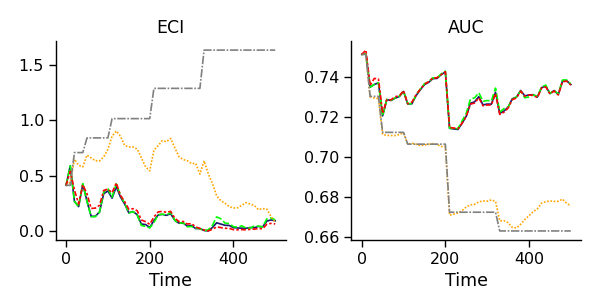}
	\caption{\texttt{Decay}, \texttt{All-Refit}}
	\label{fig:shift_good}
	\end{subfigure}

	\begin{subfigure}{0.5\linewidth}
	\includegraphics[width=0.9\linewidth]{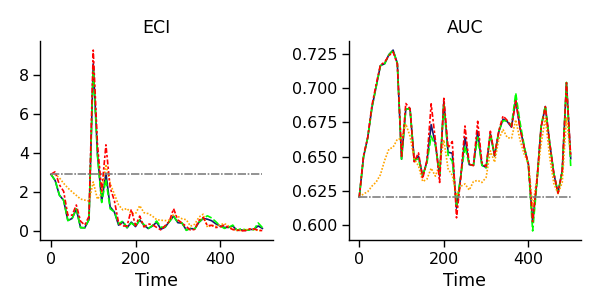}
	\caption{\texttt{Initial Shift}, \texttt{Subset-Refit}}
	\label{fig:iid_bad}
	\end{subfigure}
\hspace{-0.5cm}
	\begin{subfigure}{0.5\linewidth}
	\includegraphics[width=0.8\linewidth]{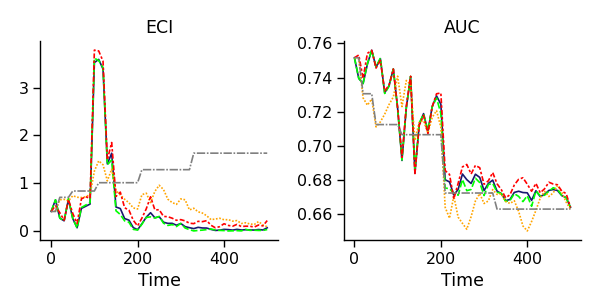}
	\includegraphics[width=0.19\linewidth]{legend_methods}
	\caption{\texttt{Decay}, \texttt{Subset-Refit}}
	\label{fig:shift_bad}
	\end{subfigure}
\caption{Model calibration and discrimination (left and right panels, respectively) from online ensembling of the original model with an underlying prediction model that is continually refitted over time (Scenario 3).
Data is simulated to be stationary over time after an initial shift (\texttt{Initial Shift}) and nonstationary such that the original model decays in performance over time (\texttt{Decay}). 
Underlying prediction model is updated by continually refitting on all previous data (\texttt{All-Refit}) or refit on the most recent subset of data (\texttt{Subset-Refit}).
Note that \texttt{Subset-Refit} simulates a sudden drop in performance for the continually refitted model at time $t = 100$ and, consequently, across all online ensembling procedures.
BLR and MarBLR recover from this sudden performance decay and achieve better performance than locking the original model in terms of the average ECI and AUC.
}
\label{fig:refitting}
\end{figure}

\begin{figure}[H]
	\centering
	\begin{subfigure}{\linewidth}
		\centering
		\includegraphics[width=0.78\linewidth]{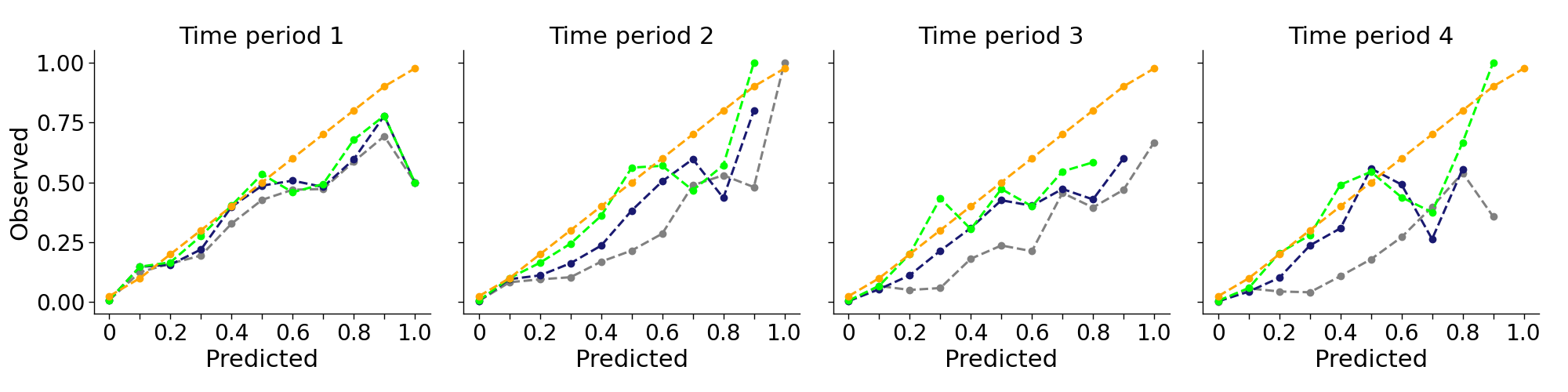}
		\hspace{-0.2cm}
		\includegraphics[width=0.1\linewidth]{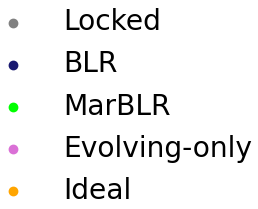}
	\caption{Online recalibration of a fixed prediction model}
	\label{fig:online_recalib_copd}
	\end{subfigure}	
	\begin{subfigure}{\linewidth}
		\centering
	\hspace{-1.8cm}
		\includegraphics[width=0.78\linewidth]{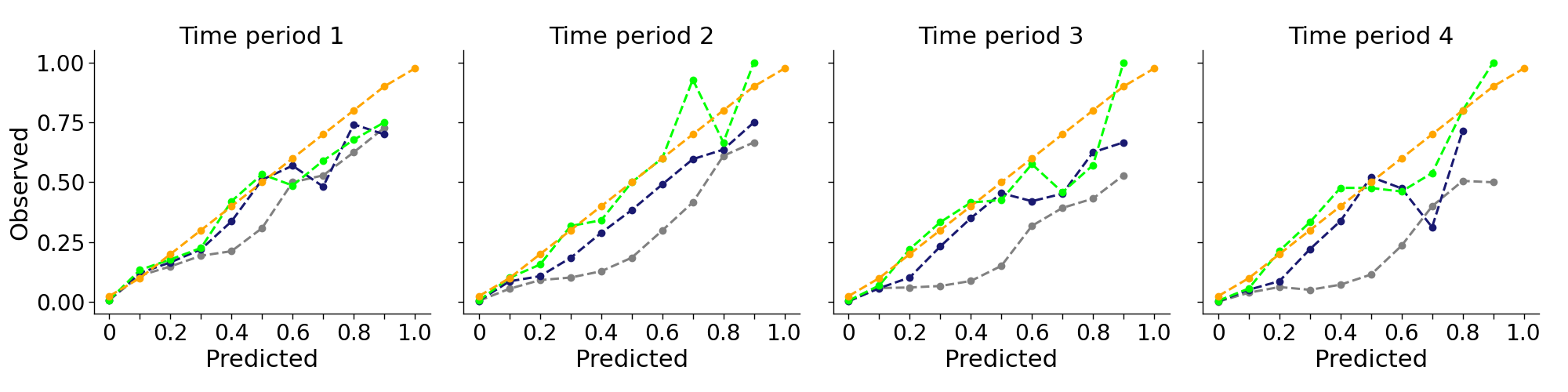}
		\caption{Online logistic revision with respect to a fixed prediction model and patient variables}
		\label{fig:online_linear_copd}
	\end{subfigure}	
	\begin{subfigure}{\linewidth}
		\centering
\hspace{-1.8cm}
	\includegraphics[width=0.78\linewidth]{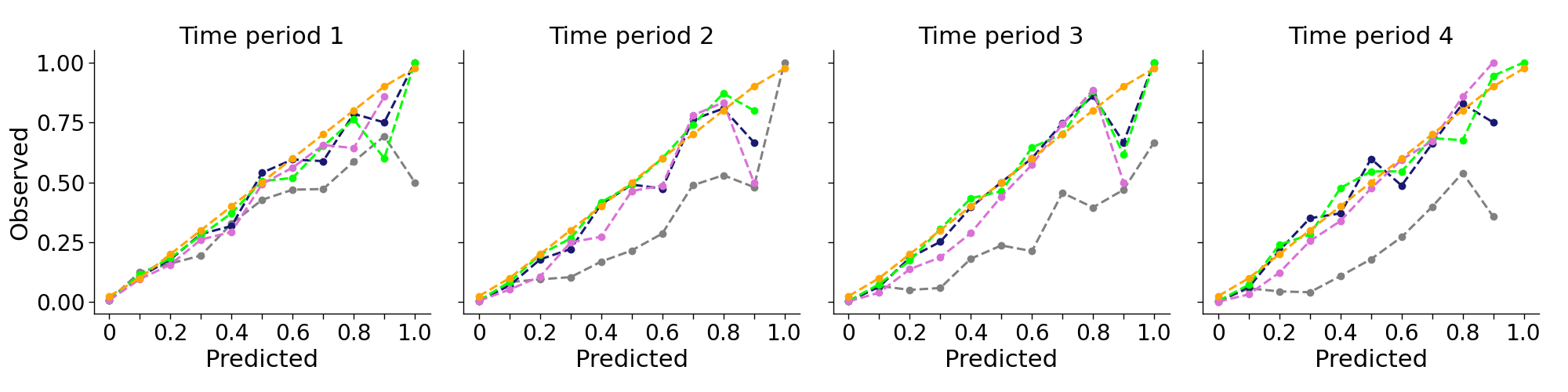}
	
	\includegraphics[width=\linewidth]{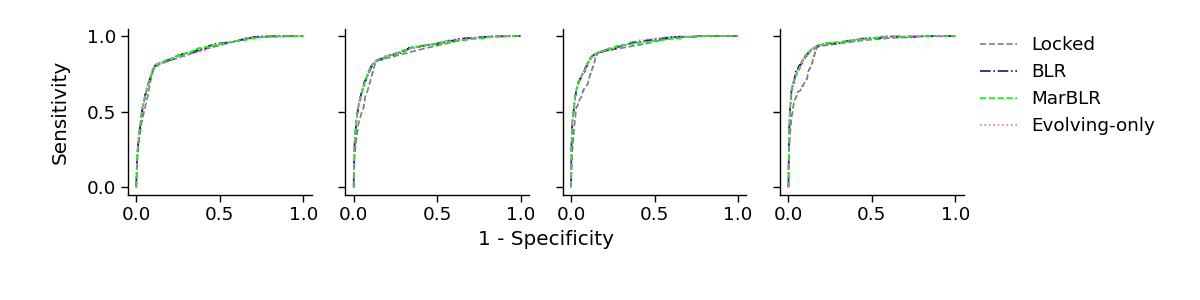}
	\vspace{-0.5cm}
\caption{Online ensembling of the original and continually-refitted prediction models}
\label{fig:online_refit_copd}
	\end{subfigure}
%
\caption{Results from online logistic recalibration and revision for a fixed COPD risk prediction model (a and b, respectively) and online reweighting for fixed and continually-refitted (evolving) COPD risk prediction models (c) using BLR and MarBLR. Calibration curves are estimated across four time periods that divide the full dataset into equal lengths. The receiver operating characteristic (ROC) curve is shown for the online ensembling approach because the ROC curves did not change significantly for (a) or (b).
}
\label{fig:copd}
\end{figure}

\pagebreak

\appendix

\section*{Appendix for ``Bayesian logistic regression for online recalibration and revision of clinical prediction models with guarantees''}

\renewcommand\thefigure{A.\arabic{figure}} 
\renewcommand\thetable{A.\arabic{table}} 

\begin{table}
	\centering
	\begin{tabular}{p{2.5in}|p{3.2in}}
		Notation & Description\\
		\toprule
		\multicolumn{2}{c}{\textit{General terms}}\\
		$T$ & Time horizon \\
		$d$ & Number of variables \\
		$(x_t, y_t)$ & Observed variables and outcome at time $t$ \\
		$\hat{f}_t : \mathcal{X} \mapsto \mathbb{R}$ & Underlying prediction model at time $t$\\
		$\hat{A}_t:\mathbb{R} \times \mathcal{X} \mapsto [0,1]$ & Model revision deployed at time $t$: A function that maps predictions from the underlying prediction model at time $t$ and patient variables to a probability  \\
		$\hat{\theta}_t$ & Parameters for logistic model revision at time $t$ \\
		$\tau = (\tau_1, \tau_2, \cdots, \tau_s)$ & Update times for a given sequence of model revisions \\
		\midrule
		\multicolumn{2}{c}{\textit{Regret}}\\
		$
		\bigg(
		\frac{1}{T}\sum_{t=1}^T 
		\bigg [- \log p \left (y_t, \hat{A}_t(\hat{f}_t(x_t), x_t) \right) 
		+ \log p \left (y_t, \hat{A}_1(\hat{f}_1(x_t), x_t) \right)  \bigg ] \bigg )_+
		$ &
		\underline{Type I Regret}: The average increase in the negative log likelihood when using the online reviser instead of locking the original model  \\
		\rule{0pt}{6ex} 
		$
		\bigg(
		\frac{1}{T}\sum_{t=1}^T 
		\bigg [ - \log p\left (y_t, \hat{A}_t(\hat{f}_t(x_t), x_t) \right) 
		+ \log p\left (y_t, {A}^*_{\tau, t}(\hat{f}_t(x_t), x_t) \right)  \bigg ] \bigg)_+
		$ &
		\underline{Type II $\tau$-Regret}: The average increase in the negative log likelihood when using the online reviser versus the oracle model reviser $\{{A}^*_{\tau, t}: t = 1,...,T \}$ with update times $\tau$ 
		\\
		\midrule
		\multicolumn{2}{c}{\textit{BLR and MarBLR parameters}}\\
		$N(\thetainit, \Sigmainit)$ & Gaussian prior in BLR and MarBLR for the logistic revision parameter at time $t = 1$ \\
		$\alpha$ & Prior probability in MarBLR that the model revision shifts at time $t$\\
		$\delta^2$ & Factor controlling the variance of the MarBLR prior over shifts in the model revision parameters
	\end{tabular}
	\caption{
		Terminology and notation
	}
	\label{eq:terminology}
\end{table}

\section{Practical Implementation of BLR and MarBLR}
\label{sec_a:practical}

In this manuscript, we implement MarBLR using a Laplace approximation of the logistic posterior and perform Kalman filtering with collapsing \citep{Gordon1990-jj, West1997-fa}.
Because BLR corresponds to MarBLR with $\alpha = 0$ and $\delta^2 = 0$, we use this same procedure to perform approximate Bayesian inference.
The Kalman filtering approach is simple and computationally efficient; We describe the steps below.
We note that for the special case of BLR, one can also perform posterior inference by sampling Polya-Gamma latent variables \citep{Polson2013-cr}.
This would allow one to perform full Bayesian inference but is significantly more costly in terms of computation time.

We make predictions and update the posterior using the following recursive procedure.
The process is initialized with the Gaussian prior for $\theta_1$ with mean $\thetainit$ and posterior covariance $\Sigmainit$.
Let $D^{(t)}$ denote the observations up to and including time $t$.

\textbf{Prediction step.} At time $t$, let the approximation for $\theta_{t-1}|W_{t-1}= w_{t-1},D^{(t-1)}$ be the Gaussian distribution with mean $\hat{\theta}_{t-1, w_{t-1}}$ and covariance  $\hat{\Sigma}_{t-1, w_{t-1}}$.
We also assume $\Pr(W_{t - 1} = w_{t - 1}|D^{(t-1)})$ is known.
We generate predictions at time $t$ using the posterior distribution $\theta_{t}|D^{(t - 1)}$, which is a mixture of the distributions
\begin{align}
\theta_{t}|W_t= w_t,W_{t - 1} = w_{t-1}, D^{(t - 1)}
\sim
N\left(
\hat{\theta}_{t-1,w_{t-1}},
(1 + \delta^2 w_t) \hat{\Sigma}_{t-1, w_{t-1}}
\right)
\label{eq:predict_parts}
\end{align}
for $w_t, w_{t=1} \in \{0,1\}$ with weights by $\Pr(W_t = w_t | W_{t-1} = w_{t-1}) \Pr(W_{t - 1} = w_{t - 1}|D^{(t-1)})$. Recall that $\Pr(W_t = 1 | W_{t-1} = w_{t-1}) = \alpha$ in the MarBLR prior.
We predict that $Y = 1$ for a subject $x$ using the posterior mean of $\Pr(Y = 1|X = x)$.

\textbf{Update step.} Next, we observe a new batch of labeled observations and update the posterior.
That is, we must perform inference for $\theta_{t}|D^{(t)}$, which is a mixture of the distributions  $\theta_{t}|W_t= w_t,W_{t-1}= w_{t-1},D^{(t)}$ with probability weights $\Pr(W_t= w_t, W_{t-1} = w_{t-1}|D^{(t)})$ for $w_t, w_{t-1} \in\{ 0,1\}$.
Let $\tilde{\ell}_t(\theta, w) = \sum_{i=1}^n \log p\left(y_{t,i}|z_{t,i}, \theta \right)  + \log p(\theta\mid w, D^{(t - 1)})$.
We approximate the distribution $\theta_{t}|W_t= w_t,W_{t - 1} = w_{t-1}, D^{(t)}$ using a Gaussian distribution with its mean computed using a Newton update
\begin{align}
\hat{\theta}_{t, w_t, w_{t-1}} = \hat{\theta}_{t-1, w_{t-1}} -
\left[\nabla_{\theta}^2 \tilde{\ell}_t \left(\hat{\theta}_{t-1, w_{t-1}}, w_{t-1}\right)\right]^{-1}
\nabla_{\theta} \tilde{\ell}_t \left(\hat{\theta}_{t-1, w_{t-1}}, w_{t-1} \right)
\end{align}
and its covariance as
$$
\hat{\Sigma}_{t ,w_t, w_{t - 1}} = \left[\nabla_{\theta}^2 \tilde{\ell}_t(\hat{\theta}_{t-1, w_{t-1}}, w_{t-1})\right]^{-1}.
$$
The probability $\Pr(W_t = w_t, W_{t-1} = w_{t-1}\mid D^{(t)})$, which is proportional to
\begin{align}
\left[
\int_{\theta_t}
p\left(y_{t,\cdot} \mid z_{t,\cdot}, \theta_t \right)
p\left(\theta_t \mid w_t, w_{t-1}, D^{(t-1)} \right)
d\theta_t
\right]
\Pr(W_t|W_{t-1})
\Pr(W_{t-1} | D^{(t-1)}),
\label{eq:prob_post}
\end{align}
is approximated using a Laplace approximation for the integral in \eqref{eq:prob_post}, i.e. 
\begin{align}
2\pi^{d/2}
\left| \left\{ \nabla^2_\theta \tilde{\ell}(\hat{\theta}_{t, w_{t}, w_{t - 1}}) \right\}^{-1} \right|^{1/2}
p\left(y_{t, \cdot} \mid z_{t,\cdot}, \hat{\theta}_{t, w_{t}, w_{t - 1}} \right)
p\left(\hat{\theta}_{t, w_{t}, w_{t - 1}} \mid D^{(t-1)} \right).
\label{eq:laplace}
\end{align}
Let $\hat{q}_{w_t, w_{t-1}}$ denote the estimated probability.
Finally, we approximate the posterior distribution $\theta_{t}|W_t= w_t,D^{(t)}$ using a single Gaussian distribution by moment-matching \citep{West1997-fa, Orguner2007-st} with mean and covariance
\begin{align}
\hat{\theta}_{t, w_t} = 
\frac{1}{\hat{q}_{w_t, 0} + \hat{q}_{w_t, 1}} \left(
\hat{q}_{w_t, 0} \hat{\theta}_{t, w_t, 0} + \hat{q}_{w_t, 1} \hat{\theta}_{t, w_t, 1}
\right)
\\
\hat{\Sigma}_{t, w_t} = 
\frac{1}{\hat{q}_{w_t, 0} + \hat{q}_{w_t, 1}} \left(
\hat{q}_{w_t, 0}\hat{\Sigma}_{t, w_t, 0} + \hat{q}_{w_t, 1} \hat{\Sigma}_{t, w_t, 1}
\right).
\end{align}

\section{Online model revision for batched data}
In certain settings, it is more convenient and practical for the data stream to be observed in batches of size $n > 1$.
Here we discuss the necessary modifications to our framework for analyzing the performance on an online model reviser for batched data.
We denote a batch of observations as $\{(x_{t,i}, y_{t,i}): i = 1,....,n\}$ and use the notation $a_{t,\cdot}$  to denote the sequence $(a_{t,1},...,a_{t,n})$.

We extend the online model reviser to output a probability distribution over all possible outcomes for a batch of observations, i.e. $\hat{A}_{t}: \mathbb{R}^n \times \mathcal{X}^n \mapsto \Delta_{2^n}$ where $\Delta_{2^n}$ is the probability simplex over all possible outcomes $(y_{t,1},...,y_{t,n})$.
The loss of the online model reviser over the entire time period is then defined as the average negative log likelihood
\begin{align}
-\frac{1}{nT}\sum_{t = 1}^T \log p\left (y_{t,\cdot},  \hat{A}_{t}\left(\hat{f}_t(x_{t, \cdot}), x_{t,\cdot} \right ) \right ).
\label{eq:batched_loss}
\end{align}
This theoretical framework allows predictions from the online model reviser to depend on all unlabeled observations $\{x_{t,i}: i = 1,...,n\}$.
By defining regret with respect to \eqref{eq:batched_loss}, we are able to derive Type I and II regret bounds for the batched setting.
This is necessary for analyzing BLR and MarBLR because outcomes are not independent given the observations up to time $t$ in the Bayesian framework.
In particular, the outcomes are correlated because of the shared (latent) revision parameter $\theta_t$.

\begin{table}[!htbp] \centering 
	\caption{Descriptive statistics of variables included in the COPD risk prediction model. Continuous variables are summarized by Mean (SD). Binary/ordinal variables are summarized by number of nonzero entries (\%).}
	\label{tab:copd} 
	\begin{tabular}{@{\extracolsep{5pt}}lr} 
		\\[-1.8ex]\hline 
		\hline \\[-1.8ex] 
		Variable & \\ 
		\hline \\[-1.8ex] 
		Diagnosed with COPD & 2756 (2.55) \\ 
		Age at encounter & 60.31 (18.60) \\ 
		\multicolumn{2}{c}{\textit{Medical history}}\\
		Asthma & 741 (0.69) \\ 
		Bronchitis & 5855 (5.42) \\ 
		COPD & 14950 (13.84) \\ 
		Smoking & 42651 (39.49) \\ 
		Pulmonary Function Test & 2844 (2.63) \\ 
		Intubation & 2420 (2.24) \\ 
		Spirometry & 1091 (1.01) \\ 
		Bilevel positive airway pressure & 710 (0.66) \\ 
		Acute coronary syndrome & 11008 (10.19) \\ 
		Pneumonia &15386 (14.25) \\ 
		Steroids & 23249 (21.53) \\ 
		Antihypertensives & 7740 (7.17) \\ 
		Short-acting bronchodilator & 13088 (12.12) \\ 
		Antihistiminic & 17768 (16.45) \\ 
		Respiratory Clearance & 2791 (2.58) \\ 
		Upper Respiratory Infection & 1242 (1.15) \\ 
		Antiarrythmic order & 7650 (7.08) \\ 
		Inhaled bronchodilators & 122 (0.11) \\ 
		Inhaled corticosteroid & 78 (0.07) \\ 
		Long-acting bronchodilator & 91 (0.08) \\ 
		Combination of inhaled bronchodilators & 8 (0.001) \\ 
		\multicolumn{2}{c}{\textit{History of current emergency department visit}} \\
		Pneumonia & 3503 (3.24) \\ 
		Short-acting bronchodilator & 5982 (5.54) \\ 
		Steroids & 4518 (4.18) \\ 
		Antihypertensives & 694 (0.64) \\ 
		Acute coronary syndrome & 2386 (2.21) \\ 
		Antiarrthymic & 1665 (1.54) \\ 
		Antihistaminic & 2624 (2.43) \\ 
		Inhaled corticosteroid & 146 (0.14) \\ 
		Inhaled bronchodilators & 304 (0.28) \\ 
		Long-acting bronchodilator & 420 (0.39) \\ 
		Asthma & 142 (0.13) \\ 
		Upper Respiratory Infection & 238 (0.22) \\ 
		Respiratory Clearance & 131 (0.12) \\ 
		Combination of inhaled bronchodilators & 3 (0.003) \\ 
		\hline \\[-1.8ex] 
	\end{tabular} 
\end{table}

\section{Type I and II Regret bounds}

\subsection{Notation and assumptions}\label{sec:notation}

We suppose there are $n$ observations at time points $t = 1,...,T$ for some $T \ge 2$.
Consider any sequence of revision parameters $\boldsymbol{\theta} = \{\theta_{1}, \theta_{2}, \ldots, \theta_{T}\}$, where $\theta_{t}\in\mathbb{R}^{d}$ for all $t = 1, 2, \ldots, T$,
with unique values at times $\boldsymbol{\tau} = \{\tau_{1},\tau_{2},...,\tau_{s}\}$, where $\tau_{1}=1 < \tau_2 < \ldots < \tau_{s} \leq T$.
In other words, $\{\theta_{\tau_{1}},\theta_{\tau_{2}},\ldots,\theta_{\tau_{s}}\}$ denotes the sequence of values that the sequence $\boldsymbol{\theta}$ shifted over.
Henceforth, we use $|\boldsymbol{\tau}|$ (rather than $s$) to indicate the number of times in $\boldsymbol{\tau}$.
For ease of notation, we use the convention $\tau_{|\tau|+1} := T + 1$. Note that the variable $\tau_{|\tau|+1}$ is not part of the sequence $\boldsymbol{\tau}$ and is used purely to simplify the notation.
We use $\taulock := \{\tau_1\} = \{1\}$ to denote the shift times in the edge case of ``locked'' sequences $\boldsymbol{\theta}$ that do not shift over time.
Let $D^{(t)}$ denote all the data observed up to time $t$.

The cumulative negative log-likelihood when using Bayesian inference at each time point is
\[
\Lbf = -\sum_{t=1}^{T}\log p\left(y_{t,\cdot}\mid z_{t,\cdot},D^{(t-1)}\right),
\]
where $p\left(y_{t,\cdot}\mid z_{t,\cdot},D^{(t-1)}\right)$ is the posterior distribution at time $t$.
The cumulative negative log-likelihoods for MarBLR and BLR are denoted by $\LMarBLR$ and $\Lblr$, respectively, and are special cases of $\Lbf$ for their specific choice of priors.
The MarBLR prior $p_0$ over $\boldsymbol{\theta}$ is defined using a Gaussian random walk with a homogeneous transition matrix as follows.
Given $\thetainit \in \R^d$, $\Sigmainit \in \R^{d\times d}$, and some shift probability $\alpha\in[0,1]$, let
\begin{equation}
\theta_{1} \sim N(\thetainit,\Sigmainit), \quad W_1 = 1,
\label{eq:MarBLR_prior_theta_1}
\end{equation}
and for $t = 2, 3, \ldots, T$ let
\begin{equation}
\begin{aligned}
\theta_{t} &=\theta_{t-1}+\beta_{t}W_{t}\\
W_{t} &\sim \Bernoulli(\alpha)\\
\beta_{t} &\sim N(0,\delta^{2}\Sigmainit).
\end{aligned}
\label{eq:MarBLR_prior_theta_t}
\end{equation}
Note that $\boldsymbol{\tau} = \left\{\tau_1, \tau_2, \ldots, \tau_{|\boldsymbol{\tau}|}\right\}$ can be regarded as the indices at which the sequence $\left\{W_1, W_2, \dots, W_T\right\}$ is $1$-valued. In particular, having $\boldsymbol{\tau} = \taulock$ implies that $W_1 = 1$ and $W_t = 0$ for all $t>1$.
The BLR prior is a special case where $\delta^2 = \alpha = 0$.

Type I regret compares BLR and MarBLR to locking the original revision parameters at its initial value $\thetainit$, i.e. $\theta_t = \thetainit$ for all $t\in\left\{1, 2, \ldots, T\right\}$.
The cumulative negative log-likelihood of the locked initial model is given by
\begin{equation*}
\Llocked = -\sum_{t=1}^{T}\log p\left(y_{t,\cdot}\mid z_{t,\cdot};\thetainit\right).
\end{equation*}

Type II $\boldsymbol{\tau}$-regret compares BLR and MarBLR to the best sequence of parameters in retrospect for update times $\boldsymbol{\tau}$, denoted $\tilde{\theta}_{\tau_{j}}$ for $j=1,...,|\tau|$.
Its cumulative negative log-likelihood is defined as
\[
\Ldyntau = -\sum_{t=1}^{T} \log p\left(y_{t,\cdot}\mid z_{t,\cdot};\tilde{\theta}_{t}\right)
\]
where $\tilde{\theta}_{\tau_{j}}$ for $j=1,...,|\tau|$ satisfy 
\begin{equation}
\left.\nabla\sum_{t=\tau_{j}}^{\tau_{j+1}-1}\log p\left(y_{t,\cdot}\mid z_{t,\cdot};\theta\right)\right|_{\theta=\tilde{\theta}_{\tau_{j}}}=0,\quad\forall j=1,\ldots, |\boldsymbol{\tau}|.
\label{eq:theta_tilde}
\end{equation}

In addition, we introduce the notion of a distribution over the sequences
$\boldsymbol{\theta}$. For such a distribution
$Q$, its expected negative log-likelihood is given by
\[
\Lq = E_{Q}\left[-\sum_{t=1}^{T}\log p\left(y_{t,\cdot}\mid z_{t,\cdot};\theta_{t}\right)\right].
\]
Given mean and variance parameters $\boldsymbol{\mu} = (\mu_{t})_{t\in\boldsymbol{\tau}}$ and $\boldsymbol{\Sigma} = (\Sigma_{t})_{t\in\boldsymbol{\tau}}$, we define $\Qtaumusigma$ to be the distribution over $\boldsymbol{\theta}$ with shift times $\boldsymbol{\tau}$ where $\theta_{\tau_{j}}$ for $j=1,...,|\boldsymbol{\tau}|$ are jointly independent and normally distributed per
\begin{equation}\label{eq:Qtaumusigma}
\theta_{\tau_{j}}\sim N\left(\mu_{\tau_{j}},\Sigma_{j}\right).
\end{equation}

Some results in the following sections rely on the assumption that there exists a constant $c>0$ such that
\begin{equation}\label{eq:second_derivative_assumption}
\left|\frac{\partial^{2}}{\partial w^{2}}\log p\left(y | z^\top \theta = w\right)\right|\le c,
\end{equation}
for all $y$ and $w$.
This always holds for logistic regression with $c \leq 1$.

\subsection{Useful Results}

Consider the prior distribution $p_{0}(\boldsymbol{\theta})$ over sequences $\boldsymbol{\theta}$.
Let $p_{0}(\boldsymbol{\tau})$ be its marginal distribution over shift times $\boldsymbol\tau$ and  $p_{0}\left(\boldsymbol{\theta} \mid \boldsymbol{\tau}\right)$ be the conditional distribution over sequences $\boldsymbol{\theta}$ with shift times $\boldsymbol{\tau}$.

\begin{lem}[Variational bound]\label{lemma:variational_bound_general}
	Consider any prior distribution $p_{0}$ over sequences $\boldsymbol{\theta}$.
	Given any $\boldsymbol{\tau}$ and any distribution $Q$, it holds that
	\begin{equation*}
	\Lbf - \Lq \le \E_{\boldsymbol{\tau} \sim Q}\left[
	\KL\left(Q(\boldsymbol{\theta} | \boldsymbol{\tau}) \mid\mid p_{0}\left(\boldsymbol{\theta} \mid \boldsymbol{\tau}\right)\right)
	\right]
	+  \KL\left(Q(\boldsymbol{\tau}) \mid\mid p_{0}\left(\boldsymbol{\tau}\right)\right).
	\end{equation*}
\end{lem}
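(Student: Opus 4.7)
The plan is the standard PAC-Bayesian / ELBO derivation: identify $\Lbf$ with the negative log marginal likelihood of the outcomes under the Bayesian model with prior $p_0$, apply a variational (Jensen) lower bound with variational distribution $Q$, and then decompose the resulting KL divergence via the chain rule into a term over shift times $\boldsymbol{\tau}$ and a term over $\boldsymbol{\theta}$ given $\boldsymbol{\tau}$.

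\textbf{Step 1 (telescoping).} First I would rewrite $\Lbf$ as a single negative log joint by the chain rule of probability: since $D^{(t-1)}$ consists of the earlier batches $(z_{1,\cdot}, y_{1,\cdot}), \ldots, (z_{t-1,\cdot}, y_{t-1,\cdot})$, the product $\prod_{t=1}^T p(y_{t,\cdot} \mid z_{t,\cdot}, D^{(t-1)})$ telescopes into the marginal joint likelihood $p(y_{1,\cdot}, \ldots, y_{T,\cdot} \mid z_{1,\cdot}, \ldots, z_{T,\cdot})$ under the prior $p_0$. I would then express this marginal as an integral over sequences $\boldsymbol{\theta}$:
\begin{equation*}
\exp(-\Lbf) \;=\; \int p_0(\boldsymbol{\theta}) \prod_{t=1}^T p\!\left(y_{t,\cdot} \mid z_{t,\cdot}; \theta_t\right) d\boldsymbol{\theta}.
\end{equation*}

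\textbf{Step 2 (Jensen / ELBO).} For any distribution $Q$ over sequences $\boldsymbol{\theta}$ (absolutely continuous with respect to $p_0$), insert and divide by $Q(\boldsymbol{\theta})$ and apply Jensen's inequality to the concave $\log$:
\begin{equation*}
-\Lbf \;=\; \log \E_{\boldsymbol{\theta} \sim Q}\!\left[\frac{p_0(\boldsymbol{\theta})}{Q(\boldsymbol{\theta})} \prod_{t=1}^T p(y_{t,\cdot} \mid z_{t,\cdot}; \theta_t)\right] \;\ge\; \E_{Q}\!\left[\sum_{t=1}^T \log p(y_{t,\cdot} \mid z_{t,\cdot}; \theta_t)\right] - \KL(Q \,\|\, p_0).
\end{equation*}
Recognizing the first expectation as $-\Lq$, this rearranges to $\Lbf - \Lq \le \KL(Q \,\|\, p_0)$.

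\textbf{Step 3 (chain rule for KL).} Since the shift times $\boldsymbol{\tau}$ are a deterministic function of $\boldsymbol{\theta}$ (the indices where $\boldsymbol{\theta}$ takes new values), the joint distributions of $(\boldsymbol{\theta}, \boldsymbol{\tau})$ under $Q$ and $p_0$ agree with the marginals of $\boldsymbol{\theta}$, so $\KL(Q(\boldsymbol{\theta}) \,\|\, p_0(\boldsymbol{\theta})) = \KL(Q(\boldsymbol{\theta}, \boldsymbol{\tau}) \,\|\, p_0(\boldsymbol{\theta}, \boldsymbol{\tau}))$. Factoring each joint as $Q(\boldsymbol{\tau}) Q(\boldsymbol{\theta}\mid\boldsymbol{\tau})$ and $p_0(\boldsymbol{\tau}) p_0(\boldsymbol{\theta}\mid\boldsymbol{\tau})$ and applying the standard chain rule for KL gives
\begin{equation*}
\KL(Q \,\|\, p_0) \;=\; \KL(Q(\boldsymbol{\tau}) \,\|\, p_0(\boldsymbol{\tau})) + \E_{\boldsymbol{\tau} \sim Q}\!\left[\KL(Q(\boldsymbol{\theta}\mid\boldsymbol{\tau}) \,\|\, p_0(\boldsymbol{\theta}\mid\boldsymbol{\tau}))\right],
\end{equation*}
which combined with Step 2 yields the claimed bound.

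The only subtlety I anticipate is bookkeeping in Step 1 — verifying that the Bayesian predictive density $p(y_{t,\cdot} \mid z_{t,\cdot}, D^{(t-1)})$ under the prior $p_0$ truly telescopes to the marginal joint over outcomes (this uses that the $z_{t,\cdot}$'s are treated as conditioning variables that enter only through the likelihood and do not themselves depend on $\boldsymbol{\theta}$). Steps 2 and 3 are standard manipulations; no concavity assumption on the likelihood or regularity of $p_0$ beyond absolute continuity of $Q$ with respect to $p_0$ is required, and the bound is vacuous (both sides $+\infty$) otherwise.
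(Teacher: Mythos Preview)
Your proposal is correct and follows essentially the same route as the paper: telescope the predictive likelihoods into the marginal $p(y_{1:T}\mid z_{1:T})$, obtain the ELBO-type bound $\Lbf - \Lq \le \KL(Q\,\|\,p_0)$, and then split the KL over $\boldsymbol{\tau}$ and $\boldsymbol{\theta}\mid\boldsymbol{\tau}$ via the chain rule. The only cosmetic difference is that the paper reaches the bound through the posterior identity $\E_Q[\log(p_T/p_0)] = \KL(Q\,\|\,p_0) - \KL(Q\,\|\,p_T)\le \KL(Q\,\|\,p_0)$ rather than Jensen's inequality, which is an equivalent derivation of the same inequality.
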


\begin{proof}
	First, we can reexpress the cumulative negative log-likelihood of the Bayesian dynamical model by chaining the conditional probabilities as follows:
	\begin{align*}
	\Lbf &= -\sum_{t=1}^{T}\log p\left(y_{t,\cdot}\mid z_{t,\cdot},D^{(t-1)}\right) \\
	&= -\log p\left((y_{t,\cdot})_{t=1,...,T}\mid(z_{t,\cdot})_{t=1,...,T}\right).
	\end{align*}
	Similarly, the cumulative negative log-likelihood of any sequence of calibration parameters can be written as
	\[
	-\sum_{t=1}^{T} \log p\left(y_{t,\cdot}\mid z_{t,\cdot};\theta_{t}\right)=-\log p\left((y_{t,\cdot})_{t=1,...,T}\mid(z_{t,\cdot})_{t=1,...,T}; \boldsymbol{\theta}\right).
	\]
	Thus, the difference in the cumulative negative log-likelihood between the Bayesian dynamical model and any sequence of parameters is given by
	\begin{align*}
	\Lbf - \Lq &= \E_{Q} \left[\log\frac{p\left((y_{t,\cdot})_{t=1,...,T}\mid(z_{t,\cdot})_{t=1,...,T}; \boldsymbol{\theta}\right)}{p\left((y_{t,\cdot})_{t=1,...,T}\mid(z_{t,\cdot})_{t=1,...,T}\right)}\right].
	\end{align*}
	By Bayes' Rule, the posterior distribution $p_{T}$ over $\boldsymbol{\theta}$ with respect to the Bayesian dynamical model satisfies
	\[
	p_{T}\left(\boldsymbol{\theta}\right) = \frac{p\left((y_{t,\cdot})_{t=1,...,T}\mid(z_{t,\cdot})_{t=1,...,T}; \boldsymbol{\theta} \right) p_{0}\left(\boldsymbol{\theta}\right)}{p\left((y_{t,\cdot})_{t=1,...,T}\mid(z_{t,\cdot})_{t=1,...,T}\right)}.
	\]
	Thus, we have that
	\begin{equation}\label{eq:Lbf-Lq-2}
	\begin{aligned}
	\Lbf - \Lq &= \E_{Q}\left[\log\frac{p_{T}\left(\boldsymbol{\theta}\right)}{p_{0}\left(\boldsymbol{\theta}\right)}\right]\\
	&= \E_{\boldsymbol{\tau}\sim Q}\left[
	\E_{\boldsymbol{\theta}\sim Q(\cdot | \boldsymbol{\tau})}\left[
	\log\frac{p_{T}\left(\boldsymbol{\theta}\mid\boldsymbol{\tau}\right)}{p_{0}\left(\boldsymbol{\theta}\mid\boldsymbol{\tau}\right)}
	\right]\right]
	+\E_{\boldsymbol{\tau}\sim Q}\left[\log \frac{p_{T}(\boldsymbol{\tau})}{p_{0}(\boldsymbol{\tau})}\right].
	\end{aligned}
	\end{equation}
	Moreover, because the KL divergence is always positive, it holds that
	\begin{equation}\label{eq:KLQp0-2}
	\begin{aligned}
	\E_{\boldsymbol{\theta}\sim Q(\cdot | \boldsymbol{\tau})}\left[\log\frac{p_{T}\left(\boldsymbol{\theta}\mid\boldsymbol{\tau}\right)}{p_{0}\left(\boldsymbol{\theta}\mid\boldsymbol{\tau}\right)}\right]
	&= \KL\left(Q(\cdot | \boldsymbol{\tau}) \mid\mid p_{0}\left(\boldsymbol{\theta}\mid\boldsymbol{\tau}\right)\right) - \KL\left(Q(\cdot | \boldsymbol{\tau}) \mid\mid p_{T}\left(\boldsymbol{\theta}\mid\boldsymbol{\tau}\right)\right)\\
	&\le \KL\left(Q(\cdot | \boldsymbol{\tau}) \mid\mid p_{0}\left(\boldsymbol{\theta}\mid\boldsymbol{\tau}\right)\right).
	\end{aligned}
	\end{equation}
	Likewise,
	\begin{align}
	E_{\boldsymbol{\tau}\sim Q}\left[\log \frac{p_{T}(\boldsymbol{\tau})}{p_{0}(\boldsymbol{\tau})}\right]
	\le \KL\left(
	Q(\boldsymbol{\tau}) \mid\mid p_{0}(\boldsymbol{\tau})
	\right).
	\end{align}
	
	Finally, by combining equations (\ref{eq:Lbf-Lq-2}) and (\ref{eq:KLQp0-2}) we arrive at the conclusion of this theorem.
\end{proof}

\subsection{Type I regret results for MarBLR}

Let the distribution $p_0$ be the MarBLR prior as defined per \eqref{eq:MarBLR_prior_theta_1} and \eqref{eq:MarBLR_prior_theta_t}.
For a given $\boldsymbol{\tau}$, let $\Qwalk$ be a Gaussian random walk with expected shifts $\beta_j$ at known shift times $\tau_j$ for $j = 1,...,|\tau|$.
That is,
\begin{align}
\theta_{\tau_{j}} - \theta_{\tau_{j - 1}} \sim N\left(\beta_j, \nu^2 \Sigmainit \right).
\end{align}
and 
\begin{align}
\theta_{\tau_{1}} \sim N\left(\mu_1, \epsilon_1^2 \Sigmainit \right).
\end{align}

We begin with simplifying the KL divergence term in Lemma~\ref{lemma:variational_bound_general}.

\begin{lem}\label{lemma:KL_gauss_rand_walk}
	For any $\boldsymbol{\tau}$, consider the Gaussian random walk $\Qwalk$.
	We have that
	\begin{align}
	\label{eq:lemma_KL_gauss_rand_walk}
	\begin{split}
	\KL\left(\Qwalk \mid\mid p_{0}\left(\boldsymbol{\theta} \mid \boldsymbol{\tau}\right)\right)
	=&  
	\frac{1}{2}\epsilon_1^{2}d+\frac{1}{2}\left(\mu_1-\thetainit\right)^{\top}\Sigmainit^{-1}\left(\mu_1-\thetainit\right)-\frac{d|\tau|}{2}\\
	& + d(|\tau| - 1) \log \frac{\delta}{\nu} - d\log \epsilon_1 \\
	& + \frac{d \nu^2 (|\tau| - 1)}{2 \delta^2} + \frac{1}{2\delta^{2}}
	\sum_{j=2}^{|\tau|} \beta_{j}^\top \Sigmainit^{-1} \beta_{j}.
	\end{split}
	\end{align}
\end{lem}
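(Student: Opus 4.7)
The plan is to exploit the fact that, conditional on the shift times $\boldsymbol{\tau}$, both distributions factor as products of independent Gaussian increments. Under $p_0(\cdot \mid \boldsymbol{\tau})$, the sequence $\boldsymbol{\theta}$ is piecewise constant on the intervals between successive $\tau_j$'s, and the free parameters $\theta_{\tau_1}, \theta_{\tau_2} - \theta_{\tau_1}, \ldots, \theta_{\tau_{|\tau|}} - \theta_{\tau_{|\tau|-1}}$ are jointly independent with distributions $N(\thetainit, \Sigmainit)$ for the first and $N(0, \delta^2 \Sigmainit)$ for each subsequent increment (this follows directly from the definition of the MarBLR prior in \eqref{eq:MarBLR_prior_theta_1}--\eqref{eq:MarBLR_prior_theta_t} once we condition on $W_t = \mathbbm{1}\{t \in \boldsymbol{\tau}\}$). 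By construction, $\Qwalk$ makes these same increments independent with distributions $N(\mu_1, \epsilon_1^2 \Sigmainit)$ and $N(\beta_j, \nu^2 \Sigmainit)$, respectively. Both distributions agree on the remaining (deterministic) coordinates $\theta_t$ for $t \notin \boldsymbol{\tau}$, so these contribute nothing to the KL divergence.

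By independence and the chain rule for KL, I would then write
\begin{equation*}
\KL\left(\Qwalk \mid\mid p_0(\boldsymbol{\theta}\mid \boldsymbol{\tau})\right)
= \KL\!\left(N(\mu_1,\epsilon_1^2\Sigmainit) \,\|\, N(\thetainit, \Sigmainit)\right)
+ \sum_{j=2}^{|\tau|} \KL\!\left(N(\beta_j, \nu^2\Sigmainit) \,\|\, N(0, \delta^2 \Sigmainit)\right).
\end{equation*}

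Next I would apply the standard closed-form identity
\begin{equation*}
\KL\!\left(N(\mu_a, \Sigma_a)\,\|\,N(\mu_b, \Sigma_b)\right)
= \tfrac{1}{2}\!\left[\Tr(\Sigma_b^{-1}\Sigma_a) + (\mu_a-\mu_b)^\top \Sigma_b^{-1}(\mu_a - \mu_b) - d + \log\tfrac{\det\Sigma_b}{\det\Sigma_a}\right]
\end{equation*}
to each term. For the initial term this gives $\tfrac{1}{2}\epsilon_1^2 d + \tfrac{1}{2}(\mu_1-\thetainit)^\top\Sigmainit^{-1}(\mu_1-\thetainit) - \tfrac{d}{2} - d\log\epsilon_1$, using $\log\det(\epsilon_1^2\Sigmainit)/\det\Sigmainit = 2d\log\epsilon_1$ (which cancels the $\Sigmainit$ determinants). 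For each of the $|\tau|-1$ subsequent terms, $\Sigma_b^{-1}\Sigma_a = (\nu^2/\delta^2) I$, the quadratic form becomes $\tfrac{1}{\delta^2}\beta_j^\top\Sigmainit^{-1}\beta_j$, and the log-determinant ratio contributes $2d\log(\delta/\nu)$.

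Summing everything and collecting the $-d/2$ with the $(|\tau|-1)(-d/2)$ into $-d|\tau|/2$ yields exactly the expression in \eqref{eq:lemma_KL_gauss_rand_walk}. There is no real obstacle; the main thing to be careful about is the direction of the determinant ratio (it is $\det\Sigma_b/\det\Sigma_a$, not the reverse), since sign errors in $\log\epsilon_1$ and $\log(\delta/\nu)$ are the easiest way for the calculation to go wrong. The use of $\Sigmainit$ as a common scaling matrix across both distributions is what makes the algebra collapse so cleanly.
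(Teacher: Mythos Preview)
Your proposal is correct and follows essentially the same approach as the paper: both reduce the KL divergence to a sum over the $|\boldsymbol{\tau}|$ Gaussian pieces by exploiting the Markov/increment structure of $\Qwalk$ and $p_0(\cdot\mid\boldsymbol{\tau})$, and then evaluate each piece via the standard closed-form Gaussian KL. The only cosmetic difference is that the paper writes the decomposition in conditional form $\log\frac{\Qsub(\theta_{j+1}\mid\theta_j)}{\psub(\theta_{j+1}\mid\theta_j)}$ and computes the integrals explicitly, whereas you first pass to the independent increments and invoke the Gaussian KL identity directly; the computations are line-for-line equivalent.
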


\begin{proof}
	
	For ease of notation, let $\Theta_{J}$ be the space over sequences
	$\left(\theta_{1},...,\theta_{J}\right)$.	
	Given the known times $\boldsymbol{\tau}$, there is a one-to-one mapping from sequences
	in $\Theta_{|\boldsymbol{\tau}|}$ to sequences in $\Theta_{T}$ with unique values
	at times $\boldsymbol{\tau}$.
	Let $\Qsub$ be the probability distribution over $\Theta_{|\boldsymbol{\tau}|}$
	as defined by $\Qwalk$. Likewise, let $\psub$
	be the PDF over $\Theta_{|\boldsymbol{\tau}|}$ as defined by the conditional
	prior distribution $p_{0}\left(\cdot\mid\boldsymbol{\tau}\right)$.
	
	We have that 
	\begin{align}
	&\KL\left(\Qwalk \mid\mid p_{0}\left(\boldsymbol{\theta} \mid \boldsymbol{\tau}\right)\right) \nonumber \\
	=&\int \cdots \int \Qsub(\boldsymbol{\theta}) 
	\log \frac{\Qsub(\boldsymbol{\theta})}{\psub(\boldsymbol{\theta})}
	\mathrm{d}\boldsymbol{\theta} \nonumber \\
	\begin{split}
	=& \int \Qsub({\theta}_1) 
	\log \frac{\Qsub({\theta}_1)}{\psub({\theta}_1)}
	\mathrm{d}{\theta}_1  + \sum_{j=1}^{|\tau| -1}
	\int \int \Qsub({\theta}_{j}, {\theta}_{j + 1}) 
	\log \frac{\Qsub({\theta}_{j + 1} \mid {\theta}_{j})}{\psub({\theta}_{j + 1}\mid {\theta}_{j})}
	\mathrm{d}{\theta}_{j + 1} \mathrm{d}{\theta}_{j}
	\end{split}
	\label{eq:kl_general_2}
	\end{align}
	The first term in \eqref{eq:kl_general_2} is the KL divergence of two multivariate Normal distributions, $N(\mu_1, \epsilon_1^2 \Sigmainit)$ and $N(\thetainit, \Sigmainit)$, and can be shown to be equal to
	\begin{equation}
	\frac{1}{2}\epsilon_1^{2}d+\frac{1}{2}\left(\mu_1-\thetainit\right)^{\top}\Sigmainit^{-1}\left(\mu_1-\thetainit\right)-d\log\epsilon_1-\frac{d}{2}.
	\label{eq:KL_intermed_1}
	\end{equation}
	Also, for $j =1,...,|\tau| -1$, we have that each summand in the second term in \eqref{eq:kl_general_2} is equal to
	\begin{align}
	& \int \int \Qsub({\theta}_{j}, {\theta}_{j + 1}) 
	\log \frac{\Qsub({\theta}_{j + 1}\mid {\theta}_{j})}{\psub({\theta}_{j + 1}\mid {\theta}_{j})}
	\mathrm{d}{\theta}_{j + 1} \mathrm{d}{\theta}_{j} \nonumber \\
	= &
	d \log\frac{\delta}{\nu}
	+\frac{1}{2\delta^{2}}
	\int \int \Qsub({\theta}_{j}, {\theta}_{j + 1}) 
	\left(
	{\theta}_{j + 1} -  {\theta}_{j}
	\right)^\top
	\Sigmainit^{-1}
	\left(
	{\theta}_{j + 1} -  {\theta}_{j}
	\right)
	\mathrm{d}{\theta}_{j + 1} \mathrm{d}{\theta}_{j}
	-\frac{d}{2}.
	\label{eq:qq_intermed_2}
	\end{align}
	By the definition of $\Qwalk$, we have that $\Delta_{j + 1} = \theta_{\tau_{j + 1}} - \theta_{\tau_j} \sim N\left (\beta_{j + 1}, \nu^2 \Sigmainit \right )$.
	Thus,
	\begin{align*}
	\int \Qsub({\theta}_{j}, {\theta}_{j + 1}) 
	\left(
	{\theta}_{j + 1} -  {\theta}_{j}
	\right)^\top
	\Sigmainit^{-1}
	\left(
	{\theta}_{j + 1} -  {\theta}_{j}
	\right)
	\mathrm{d}{\theta}_{j + 1} \mathrm{d}{\theta}_{j}
	=& d\nu^2 + \beta_{j+1}^\top \Sigmainit^{-1} \beta_{j+1}.
	\end{align*}
	Plugging in the above result into \eqref{eq:qq_intermed_2}, we have that
	\begin{align}
	\int \int \Qsub({\theta}_{j}, {\theta}_{j + 1}) 
	\log \frac{\Qsub({\theta}_{j + 1} \mid {\theta}_{j})}{\psub({\theta}_{j + 1}\mid {\theta}_{j})}
	\mathrm{d}{\theta}_{j + 1} \mathrm{d}{\theta}_{j} 
	= &
	d \log\frac{\delta}{\nu}
	+\frac{1}{2\delta^{2}}
	\left(
	d\nu^2 + \beta_{j+1}^\top \Sigmainit^{-1} \beta_{j+1}
	\right)
	-\frac{d}{2}.
	\label{eq:KL_intermed_2}
	\end{align}
	Combining the results \eqref{eq:kl_general_2}, \eqref{eq:KL_intermed_1} and \eqref{eq:KL_intermed_2}, we attain the desired conclusion.
\end{proof}

To bound the Type I regret for MarBLR, we compare the regret via the intermediary $Q$ with marginal distribution over $\boldsymbol{\tau}$ the same as $p_0$ and the conditional distribution given $\tau$ to be $\Qwalk$ with $\boldsymbol{\beta}_j = 0$ for all $j = 2,...,|\tau|$.
That is, the regret is decomposed into
\begin{align}
(\Lbf - \Lq) + (\Lq - \Llocked).
\end{align}

We bound $\Lbf - \Lq$ by marginalizing Lemma~\ref{lemma:KL_gauss_rand_walk} over $\boldsymbol{\tau}$ as follows.
\begin{lem}\label{lemma:KL_margin_walk}
	Let the distribution $p_0$ be defined as above.
	Let distribution $Q$ over $\boldsymbol{\theta}$ have the same distribution over $\boldsymbol{\tau}$ as $p_0$, with $\theta_{1}$ distributed $N(\thetainit, \epsilon_{1}^2 \Sigmainit)$, and $Q(\cdot | \boldsymbol{\tau})$ be a zero-centered Gaussian random walk $\boldsymbol{\beta}_j = 0$ for all $j = 2,...,|\tau|$.
	Let $\xi = \E_{p_0}|\boldsymbol{\tau}|$.
	We have that
	\begin{align}
	\Lbf - \Lq
	\le & 
	\frac{1}{2}\epsilon_1^{2}d -\frac{d \xi}{2} + d(\xi - 1) \log\frac{\delta}{\nu} - d\log \epsilon_1  + \frac{d \nu^2 (\xi - 1)}{2 \delta^2}.
	\end{align}
\end{lem}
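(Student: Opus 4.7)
The plan is to combine the two preceding results: the general variational bound of Lemma \ref{lemma:variational_bound_general} and the closed-form KL computation of Lemma \ref{lemma:KL_gauss_rand_walk}, then take an expectation over $\boldsymbol{\tau}$. Concretely, I would instantiate the variational bound with the specific $Q$ defined in the statement.

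First, I would apply Lemma \ref{lemma:variational_bound_general} to get
\begin{equation*}
\Lbf - \Lq \le \E_{\boldsymbol{\tau}\sim Q}\left[\KL\left(Q(\boldsymbol{\theta}\mid\boldsymbol{\tau}) \;\middle\|\; p_0(\boldsymbol{\theta}\mid\boldsymbol{\tau})\right)\right] + \KL\left(Q(\boldsymbol{\tau}) \;\middle\|\; p_0(\boldsymbol{\tau})\right).
\end{equation*}
By construction $Q$ and $p_0$ share the same marginal over $\boldsymbol{\tau}$, so the second KL term is identically zero, which kills the dependence on the combinatorial structure of the shift times in that summand.

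Next, for any fixed $\boldsymbol{\tau}$, the conditional $Q(\cdot\mid\boldsymbol{\tau})$ is exactly a Gaussian random walk of the form $\Qwalk$ with $\mu_1 = \thetainit$, initial variance scale $\epsilon_1$, step variance scale $\nu$, and all expected shift means $\beta_j = 0$ for $j = 2,\ldots,|\boldsymbol{\tau}|$. I would plug these parameters into the formula \eqref{eq:lemma_KL_gauss_rand_walk} from Lemma \ref{lemma:KL_gauss_rand_walk}. The term $\tfrac{1}{2}(\mu_1 - \thetainit)^\top \Sigmainit^{-1}(\mu_1 - \thetainit)$ vanishes because $\mu_1 = \thetainit$, and the sum $\tfrac{1}{2\delta^2}\sum_{j=2}^{|\boldsymbol{\tau}|} \beta_j^\top \Sigmainit^{-1}\beta_j$ vanishes because each $\beta_j = 0$. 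What remains is an expression that is affine in $|\boldsymbol{\tau}|$.

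Finally, I would take the expectation of that affine expression with respect to $\boldsymbol{\tau}\sim Q$. By linearity of expectation and the definition $\xi = \E_{p_0}|\boldsymbol{\tau}| = \E_{Q}|\boldsymbol{\tau}|$, every factor of $|\boldsymbol{\tau}|$ is replaced by $\xi$ and every factor of $|\boldsymbol{\tau}|-1$ is replaced by $\xi - 1$, producing exactly the stated bound. There is no real obstacle here; the main thing to be careful about is matching the parameter assignments into Lemma \ref{lemma:KL_gauss_rand_walk} correctly and observing that each occurrence of $|\boldsymbol{\tau}|$ in the KL formula enters linearly, so that a single application of linearity of expectation suffices.
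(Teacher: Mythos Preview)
Your proposal is correct and follows essentially the same approach as the paper: apply Lemma~\ref{lemma:variational_bound_general}, use that $Q(\boldsymbol{\tau}) = p_0(\boldsymbol{\tau})$ to kill the second KL term, then substitute the specialized parameters into Lemma~\ref{lemma:KL_gauss_rand_walk} and take the expectation over $\boldsymbol{\tau}$ via linearity. The paper's proof is a one-sentence version of exactly this; your only addition is making explicit which terms vanish and why the expectation step is immediate.
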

\begin{proof}
	Taking the expectation of \eqref{eq:lemma_KL_gauss_rand_walk} from Lemma~\ref{lemma:KL_gauss_rand_walk} with respect to $\boldsymbol{\tau}$ under the additional assumptions of this Lemma, and plugging the result into Lemma~\ref{lemma:variational_bound_general} yields the desired conclusion.
\end{proof}

Next we bound $\Lq - \Llocked$.
\begin{lem}\label{lemma:Lq_minus_Llocked_general}
	Assume that there is a $c>0$ that bounds the second derivative as in (\ref{eq:second_derivative_assumption}).
	Assume that there is an $R$ such that $\frac{1}{n(\tau_{j+1}-\tau_{j})}\sum_{t=\tau_{j}}^{\tau_{j+1}-1}\sum_{i=1}^{n}z_{t,i}z_{t,i}^{\top}\preceq R^{2}I$ for all $j\in\{1, 2, \ldots, |\boldsymbol{\tau}|\}$.
	Let $\Qwalk$ be the zero-centered Gaussian random walk with $\mu_1 = \thetainit$.
	Then it holds that
	\[
	\Lqwalk - \Llocked \le
	\frac{cnR^{2} }{2}  \Tr(\Sigmainit) \left(
	T \epsilon_1^{2} + 
	\nu^2 \sum_{j=2}^{|\tau|}(\tau_{j + 1} - \tau_{j})(j - 1) 
	\right ).
	\]
\end{lem}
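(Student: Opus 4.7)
The plan is to Taylor-expand the per-observation log-likelihood to second order around $\thetainit$ and exploit that $Q = \Qwalk$ with $\mu_1 = \thetainit$ and $\boldsymbol{\beta}_j = 0$ has $\E_Q[\theta_t] = \thetainit$ for every $t$, so the first-order term vanishes in expectation and one only pays for the variance. Concretely, for each $(t,i)$ set $w = z_{t,i}^\top\theta$ and $w_0 = z_{t,i}^\top\thetainit$, and view $g(w) := \log p(y_{t,i}\mid w)$ as a scalar function. The assumption \eqref{eq:second_derivative_assumption} gives $g''(w) \ge -c$ for all $w$, hence the global quadratic lower bound
\begin{equation*}
g(w) \ge g(w_0) + g'(w_0)(w - w_0) - \tfrac{c}{2}(w - w_0)^2,
\end{equation*}
equivalently the upper bound $-\log p(y_{t,i}\mid z_{t,i};\theta) \le -\log p(y_{t,i}\mid z_{t,i};\thetainit) - g'(w_0)(w-w_0) + \tfrac{c}{2}(w-w_0)^2$.

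Next, take expectations with respect to $\theta_t \sim Q$. Since $\E_Q[w - w_0] = z_{t,i}^\top(\E_Q[\theta_t] - \thetainit) = 0$, the linear term drops out, leaving
\begin{equation*}
\E_Q\bigl[-\log p(y_{t,i}\mid z_{t,i};\theta_t)\bigr] \le -\log p(y_{t,i}\mid z_{t,i};\thetainit) + \tfrac{c}{2}\, z_{t,i}^\top \Variance_Q(\theta_t)\, z_{t,i}.
\end{equation*}
Summing over $i=1,\ldots,n$ and $t=1,\ldots,T$ yields
\begin{equation*}
\Lqwalk - \Llocked \le \frac{c}{2}\sum_{t=1}^T \sum_{i=1}^n z_{t,i}^\top \Variance_Q(\theta_t)\, z_{t,i}.
\end{equation*}

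The remaining work is bookkeeping on $\Variance_Q(\theta_t)$ and on $\sum_i z_{t,i} z_{t,i}^\top$. Under $\Qwalk$ with zero-mean increments, $\theta_{\tau_j} = \theta_{\tau_1} + \sum_{k=2}^{j}\Delta_k$ with $\theta_{\tau_1}\sim N(\thetainit, \epsilon_1^2\Sigmainit)$ and $\Delta_k \sim N(0,\nu^2\Sigmainit)$ jointly independent, so for every $t \in \{\tau_j, \ldots, \tau_{j+1}-1\}$ we have $\Variance_Q(\theta_t) = \bigl(\epsilon_1^2 + (j-1)\nu^2\bigr)\Sigmainit$. Using $z^\top A z = \Tr(A\, z z^\top)$ and the operator-norm assumption $\frac{1}{n(\tau_{j+1}-\tau_j)}\sum_{t=\tau_j}^{\tau_{j+1}-1}\sum_{i=1}^n z_{t,i} z_{t,i}^\top \preceq R^2 I$, monotonicity of trace against PSD matrices gives
\begin{equation*}
\sum_{t=\tau_j}^{\tau_{j+1}-1}\sum_{i=1}^n z_{t,i}^\top \Sigmainit z_{t,i} \le n(\tau_{j+1}-\tau_j) R^2 \Tr(\Sigmainit).
\end{equation*}
Substituting and regrouping,
\begin{equation*}
\Lqwalk - \Llocked \le \frac{c n R^2 \Tr(\Sigmainit)}{2}\sum_{j=1}^{|\boldsymbol{\tau}|}\bigl(\epsilon_1^2 + (j-1)\nu^2\bigr)(\tau_{j+1}-\tau_j).
\end{equation*}
Finally, the telescoping identity $\sum_{j=1}^{|\boldsymbol{\tau}|}(\tau_{j+1}-\tau_j) = T$ and the observation that the $j=1$ summand contributes nothing to the $\nu^2$ part reduce this to the stated form.

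I do not expect any serious obstacle: the second-derivative bound together with $\E_Q[\theta_t] = \thetainit$ is exactly what makes the quadratic-remainder route work in one shot. The only subtlety is keeping straight that the variance of $\theta_t$ grows linearly in the segment index $j$ (not in the absolute time $t$) and, correspondingly, that each $(j-1)$ coefficient is multiplied by the full segment length $(\tau_{j+1}-\tau_j)$ rather than by $1$.
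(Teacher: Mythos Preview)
Your proposal is correct and follows essentially the same approach as the paper: Taylor-expand the negative log-likelihood to second order around $\thetainit$, use $\E_Q[\theta_t]=\thetainit$ to kill the linear term, bound the quadratic remainder via \eqref{eq:second_derivative_assumption} and the $R^2$ assumption, and plug in $\Variance_Q(\theta_{\tau_j}) = (\epsilon_1^2 + (j-1)\nu^2)\Sigmainit$. The only cosmetic difference is that you invoke the global quadratic lower bound coming from $g''\ge -c$ on each observation, whereas the paper writes the second-order Taylor expansion with a Lagrange-form remainder at some $\theta_{\mathrm{mid}}$ on each segment; the resulting inequalities and final bound are identical.
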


\begin{proof}
	We use a Taylor expansion. For $j = 1,...,|\tau|$, there is some $\theta_{mid}$ such that
	\begin{align}
	\begin{split}
	-\sum_{t=\tau_j}^{\tau_{j + 1} - 1}\sum_{i=1}^{n} \log p\left(y_{t,i}\mid z_{t,i};\theta_{\tau_j} \right)
	=&-\sum_{t=\tau_j}^{\tau_{j + 1} - 1}\sum_{i=1}^{n}\log p\left(y_{t,i}\mid z_{t,i};\thetainit\right)  -\left.\nabla_{\theta}\sum_{t=\tau_j}^{\tau_{j + 1} - 1}\sum_{i=1}^{n}\log p\left(y_{t,i}\mid z_{t,i};\theta\right)\right|_{\theta=\thetainit}^{\top}\left(\theta_{\tau_j}-\thetainit\right)\\
	& \ -\frac{1}{2}\left(\theta_{\tau_j}-\thetainit\right)^{\top}\left.\nabla_{\theta}^{2}\sum_{t=\tau_j}^{\tau_{j + 1} - 1}\sum_{i=1}^{n}\log p\left(y_{t,i}\mid z_{t,i};\theta\right)\right|_{\theta=\theta_{mid}}\left(\theta_{\tau_j}-\thetainit\right).
	\end{split}
	\label{eq:taylor_expansion_general}
	\end{align}
	Note that
	\[
	\left(\theta_{\tau_j}-\thetainit\right)^{\top}\nabla_{\theta}^{2}\log p\left(y\mid z;\theta\right)\left(\theta_{\tau_j}-\thetainit\right)=\frac{\partial^{2}}{\partial w^{2}}\log p\left(y|w\right)\left(z^{\top}\left(\theta_{\tau_j}-\thetainit\right)\right)^{2},
	\]
	where $w=z^{\top}\theta$ is the predicted logit. Using equation (\ref{eq:second_derivative_assumption}) it follows that
	\begin{align}
	\label{eq:quadratic_form_general}
	& \left|\frac{1}{2}\left(\theta_{\tau_j}-\thetainit\right)^{\top}\left.\nabla_{\theta}^{2}\sum_{t=\tau_j}^{\tau_{j + 1} - 1}\sum_{i=1}^{n}\log p\left(y_{t,i}\mid z_{t,i};\theta\right)\right|_{\theta=\theta_{mid}}\left(\theta_{\tau_j}-\thetainit\right)\right|\\
	\le&\frac{c}{2}\sum_{t=\tau_j}^{\tau_{j + 1} - 1}\sum_{i=1}^{n}\left(z_{t,i}^{\top}\left(\theta_{\tau_j}-\thetainit\right)\right)^{2}\\
	= &\frac{c}{2}\left(\theta_{\tau_j}-\thetainit\right)^\top \left(\sum_{t=\tau_j}^{\tau_{j + 1} - 1}\sum_{i=1}^{n}z_{t,i}z_{t,i}^{\top}\right)\left(\theta_{\tau_j}-\thetainit\right).
	\end{align}
	Because the expected value of $\theta$ with respect to $Q$ is $\thetainit$, we have the following after taking the expectation of equation (\ref{eq:taylor_expansion_general}) combined with equation (\ref{eq:quadratic_form_general}):
	
	\begin{align*}
	\Lq &= \E_{Q}\left[-\sum_{t=1}^{T}\sum_{i=1}^{n}\log p\left(y_{t,i}\mid z_{t,i};\theta_t\right)\right] \\
	&\leq \Llocked + \sum_{j=1}^{|\tau|} \frac{c}{2} \E_{Q}\left[\left(\theta_{\tau_j}-\thetainit\right)^{\top}\left(\sum_{t=\tau_j}^{\tau_{j + 1} - 1}\sum_{i=1}^{n}z_{t,i}z_{t,i}^{\top}\right)\left(\theta_{\tau_j}-\thetainit\right)\right].
	\end{align*}
	
	Assuming there exists some $R^2$ that satisfies the lemma assumptions, the following holds after taking the expectation with respect to $Q$:
	\begin{align*}
	\E_{Q}\left[\left(\theta_{\tau_j}-\thetainit\right)^{\top}\left(\sum_{t=\tau_j}^{\tau_{j + 1} - 1}\sum_{i=1}^{n}z_{t,i}z_{t,i}^{\top}\right)\left(\theta_{\tau_j}-\thetainit\right)\right]
	&\le (\tau_{j + 1} - \tau_{j})nR^{2}  \E_{Q}\|\theta_{\tau_j}-\thetainit\|^{2}\\
	&= (\tau_{j + 1} - \tau_{j}) nR^{2} \left(\epsilon_1^{2} + (j - 1) \nu^2 \right) \Tr(\Sigmainit).
	\end{align*}
	After summing over $j = 1,...,|\tau|$, we reach our desired result.
\end{proof}

We combine the two prior lemmas to obtain the following bound on the Type I error for MarBLR.

\begin{thm}[Type I regret for MarBLR]
	Let $\xi = \E_{p_0} |\boldsymbol{\tau}|$ denote the expected number of shift times be denoted.
	The Type I regret for MarBLR is bounded as follows:
	\begin{align*}
	\LMarBLR - \Llocked
	&\le
	\frac{d}{2} \log \left(
	1 + \frac{cnR^{2} T  \Tr(\Sigmainit)}{d}
	\right) \\
	&\quad\quad +
	\frac{d \alpha (T - 1)}{2} \log \left(
	1 + \frac{\delta^2 c nR^{2} T \Tr(\Sigmainit)}{2 d}
	\right).
	\end{align*}
	\label{thrm:type_i_marblr}
\end{thm}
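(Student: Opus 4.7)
The plan is to apply the variational bound from Lemma~\ref{lemma:variational_bound_general} with a carefully chosen proposal distribution $Q$, decompose the regret as
\[
\LMarBLR - \Llocked \;=\; (\LMarBLR - \Lq) \;+\; (\Lq - \Llocked),
\]
and then optimize the free parameters of $Q$. Concretely, I would take $Q$ to have the same marginal distribution over the shift times $\boldsymbol{\tau}$ as the MarBLR prior $p_0$, and conditional on $\boldsymbol{\tau}$, let $Q(\cdot \mid \boldsymbol{\tau}) = \Qwalk$ be the zero-centered Gaussian random walk with $\mu_1 = \thetainit$ and $\boldsymbol{\beta}_j = 0$ for $j \ge 2$, leaving $\epsilon_1$ and $\nu$ as tunable scale parameters. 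With this choice, the $\KL(Q(\boldsymbol{\tau}) \mid\mid p_0(\boldsymbol{\tau}))$ contribution vanishes, and Lemma~\ref{lemma:KL_margin_walk} directly gives the bound on $\Lbf - \Lq$ in terms of $\xi = \E_{p_0}|\boldsymbol{\tau}| = 1 + \alpha(T-1)$.

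For the second term, Lemma~\ref{lemma:Lq_minus_Llocked_general} gives a bound for any fixed $\boldsymbol{\tau}$; taking the expectation with respect to $\boldsymbol{\tau} \sim p_0$ requires evaluating
\[
\E_{p_0}\!\left[\sum_{j=1}^{|\boldsymbol{\tau}|}(\tau_{j+1}-\tau_j)(j-1)\right].
\]
The key identity here is that this quantity equals $\sum_{t=1}^{T}(N_t - 1)$, where $N_t$ is the number of shifts at or before time $t$, since for each $t$ lying in segment $j$ the integrand contributes $j-1$. Under the MarBLR prior $\E[N_t] = 1 + (t-1)\alpha$, yielding
\[
\E_{p_0}\!\left[\sum_{j=1}^{|\boldsymbol{\tau}|}(\tau_{j+1}-\tau_j)(j-1)\right] \;=\; \frac{\alpha T(T-1)}{2} \;=\; \frac{(\xi-1)T}{2}.
\]
Combining this with Lemma~\ref{lemma:Lq_minus_Llocked_general} (in expectation) and the bound on $\Lbf - \Lq$ gives a single expression depending on $\epsilon_1$ and $\nu$.

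The main obstacle, and the final step, is to tune $\epsilon_1^2$ and $\nu^2$. The terms involving $\epsilon_1^2$ decouple from those involving $\nu^2$, and each subproblem has the form $a x - b \log x$ (in $\epsilon_1^2$ and in $\eta = \nu^2/\delta^2$, respectively), whose minimum has the closed form $b - b\log(b/a)$. Setting
\[
\epsilon_1^2 \;=\; \frac{d}{d + cnR^2 T\,\Tr(\Sigmainit)}, \qquad \nu^2 \;=\; \frac{d\,\delta^2}{d + \tfrac{1}{2}cnR^2 T\,\delta^2\,\Tr(\Sigmainit)},
\]
the $\epsilon_1$-terms collapse to $\tfrac{d}{2}\log\bigl(1 + cnR^2 T\,\Tr(\Sigmainit)/d\bigr)$ up to a $d/2$ remainder, and the $\nu$-terms collapse to $\tfrac{d(\xi-1)}{2}\log\bigl(1 + cnR^2 T\delta^2\,\Tr(\Sigmainit)/(2d)\bigr)$ up to a $d(\xi-1)/2$ remainder. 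The remaining constants $-d\xi/2 + d/2 + d(\xi-1)/2$ cancel identically, leaving exactly the stated bound after substituting $\xi - 1 = \alpha(T-1)$.
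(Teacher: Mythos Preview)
Your proposal is correct and follows essentially the same approach as the paper: the same choice of intermediary $Q$ (same marginal over $\boldsymbol{\tau}$ as $p_0$, zero-centered Gaussian random walk conditional on $\boldsymbol{\tau}$), the same application of Lemmas~\ref{lemma:KL_margin_walk} and~\ref{lemma:Lq_minus_Llocked_general}, and the same optimization over $\epsilon_1^2$ and $\nu^2$. The only cosmetic difference is that you compute $\E_{p_0}\bigl[\sum_j(\tau_{j+1}-\tau_j)(j-1)\bigr]$ via $\sum_t(\E[N_t]-1)$ whereas the paper rewrites it as $\E_{p_0}\bigl[\sum_{t=2}^T W_t(T+1-t)\bigr]$; both yield $\alpha T(T-1)/2$.
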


\begin{proof}
	First, note that under the MarBLR prior $p_0$ over shift times $\boldsymbol{\tau}$ as defined previously, we have that
	$$
	\E_{p_0} \left[
	\sum_{j=2}^{|\tau|}(\tau_{j + 1} - \tau_{j})(j - 1) 
	\right]
	=
	\E_{p_0}\left[\sum_{t=2}^T W_t (T + 1 - t)\right]
	= \frac{\alpha}{2} T(T - 1),
	$$
	and $\xi = \E_{p_{0}}|\boldsymbol{\tau}| = \alpha (T - 1) + 1$.
	
	Thus, summing the upper bounds from Lemmas~\ref{lemma:KL_margin_walk} and \ref{lemma:Lq_minus_Llocked_general} and taking expectations with respect to $\boldsymbol{\tau} \sim p_0$, we have that
	\begin{align}
	\Lbf - \Llocked
	& \le
	\frac{1}{2}\epsilon_1^{2}d -\frac{d \alpha (T - 1)}{2} - \frac{d}{2} + d \alpha (T - 1) \log\frac{\delta}{\nu} - d\log \epsilon_1  + \frac{d  \alpha (T - 1)}{2 \delta^2}\nu^2
	\\
	&
	+ \frac{cnR^{2} T}{2}  \Tr(\Sigmainit) \left(
	\epsilon_1^{2} + 
	\frac{\alpha}{2} (T - 1) \nu^2 
	\right ).
	\end{align}
	We minimize the upper bound by selecting
	$$
	\epsilon_1^2 = \frac{d}{d + cnR^{2} T  \Tr(\Sigmainit)}
	$$
	and
	$$
	\nu^2 = \frac{d}{\frac{d}{\delta^2} + \frac{c}{2} nR^{2} T \Tr(\Sigmainit)}
	$$
	to obtain the upper bound
	$$
	\frac{d}{2} \log \left(
	1 + \frac{cnR^{2} T  \Tr(\Sigmainit)}{d}
	\right)
	+
	\frac{d \alpha (T - 1)}{2} \log \left(
	1 + \frac{\delta^2 c nR^{2} T \Tr(\Sigmainit)}{2 d}
	\right).
	$$
\end{proof}

\subsection{Type II $\boldsymbol{\tau}$-regret results for BLR}

Let $\thetataulock$ be the minimizer of the cumulative log-likelihood of the locked model, i.e., $\thetataulock$ satisfies that
\[
\left.\nabla\sum_{t=1}^{T}\sum_{i=1}^{n}\log p\left(y_{t,i}\mid z_{t,i};\theta\right)\right|_{\theta=\tilde{\theta}_{\tau_{\mathrm{locked}}}}=0.
\]
Let $\tilde{Q}$ denote the distribution $\Qtaulocked$ (defined according to section \ref{sec:notation} and equation (\ref{eq:Qtaumusigma}) with the parameters specified here). That is, we have that
\[
\tilde{Q}(\theta_1) = \Qtaulocked(\theta_1)=N\left(\tilde{\theta}_{\tau_{locked}},\epsilon^{2}\Sigmainit\right),
\]
and $\theta_t = \theta_1$ for all $t\in\left\{2, 3, \ldots, T\right\}$.

We bound the difference in the cumulative negative log-likelihood,
$\Lblr - \Ldyntau$, by breaking it into two summands

\begin{equation}\label{eq:blr_decomposition}
\Lblr - \Ldyntau = \left(\Lblr- L_{\tilde{Q}}\right) + \left(L_{\tilde{Q}} - \Ldyntau\right).
\end{equation}

We have already bounded the first summand by Lemmas~\ref{lemma:variational_bound_general} and \ref{lemma:KL_gauss_rand_walk}. We just need to bound the second summand.

\begin{lem}\label{lemma:blr}
	Assume that the second derivative is bounded by a constant $c$ as shown in equation (\ref{eq:second_derivative_assumption}), and that there are $R_{\tau_1}, R_{\tau_2}, \ldots, R_{\tau_{|\boldsymbol{\tau}|}}$ such that
	$$\frac{1}{n(\tau_{j+1}-\tau_{j})}\sum_{t=\tau_{j}}^{\tau_{j+1}-1}\sum_{i=1}^{n}z_{t,i}z_{t,i}^{\top}\preceq R_{j}^{2}I.$$
	It holds that
	\begin{align*}
	L_{\tilde{Q}} - \Ldyntau
	& \le \frac{cn \sum_{j=1}^{|\boldsymbol{\tau}|}R_j^2\left(\tau_{j+1}-\tau_{j}\right)}{2} \epsilon^{2}\Tr(\Sigmainit)
	+ \frac{cn}{2}\sum_{j=1}^{|\boldsymbol{\tau}|}R_j^2\left(\tau_{j+1}-\tau_{j}\right)\left\Vert \tilde{\theta}_{\tau_{locked}}-\tilde{\theta}_{\tau_j}\right\Vert ^{2}.
	\end{align*}
\end{lem}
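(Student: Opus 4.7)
The plan is to mirror the argument of Lemma~\ref{lemma:Lq_minus_Llocked_general}, but now expanding on each segment around the \emph{segment-specific} MLE $\tilde{\theta}_{\tau_j}$ rather than around $\thetainit$. Since $\tilde{\theta}_{\tau_j}$ satisfies the first-order condition \eqref{eq:theta_tilde} on $[\tau_j,\tau_{j+1}-1]$, the linear term in the Taylor expansion vanishes, leaving only a quadratic remainder that can be bounded using \eqref{eq:second_derivative_assumption}.

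Concretely, for a draw $\theta_1 \sim N(\thetataulock, \epsilon^2 \Sigmainit)$ from $\tilde{Q}$ and each $j\in\{1,\dots,|\boldsymbol{\tau}|\}$, a second-order Taylor expansion of $\log p(y_{t,i}\mid z_{t,i};\cdot)$ around $\tilde{\theta}_{\tau_j}$, summed over $t\in[\tau_j,\tau_{j+1}-1]$ and $i\in[n]$, gives
\[
-\sum_{t=\tau_j}^{\tau_{j+1}-1}\sum_{i=1}^{n}\log p(y_{t,i}\mid z_{t,i};\theta_1)
= -\sum_{t=\tau_j}^{\tau_{j+1}-1}\sum_{i=1}^{n}\log p(y_{t,i}\mid z_{t,i};\tilde\theta_{\tau_j})
- \tfrac{1}{2}\bigl(\theta_1-\tilde\theta_{\tau_j}\bigr)^{\!\top}\!H_j\bigl(\theta_1-\tilde\theta_{\tau_j}\bigr),
\]
where $H_j$ is the Hessian evaluated at some intermediate point and the gradient term dropped by optimality of $\tilde\theta_{\tau_j}$. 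Exactly as in the derivation of \eqref{eq:quadratic_form_general}, assumption~\eqref{eq:second_derivative_assumption} converts $|H_j|$ into a bound via $c\sum_{t,i} z_{t,i}z_{t,i}^{\top}$, which is then dominated by $cn(\tau_{j+1}-\tau_j)R_j^2 I$ under the segment-wise spectral hypothesis of the lemma.

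Next I would take expectation over $\theta_1\sim N(\thetataulock,\epsilon^2\Sigmainit)$ using the standard bias-variance identity for a Gaussian quadratic form: for any p.s.d.\ $M$,
\[
\E_{\tilde Q}\bigl[(\theta_1-\tilde\theta_{\tau_j})^{\!\top} M(\theta_1-\tilde\theta_{\tau_j})\bigr]
= \epsilon^2\,\Tr(M\Sigmainit) + \bigl(\thetataulock-\tilde\theta_{\tau_j}\bigr)^{\!\top} M \bigl(\thetataulock-\tilde\theta_{\tau_j}\bigr).
\]
Applying this with $M = \sum_{t,i} z_{t,i}z_{t,i}^{\top} \preceq n(\tau_{j+1}-\tau_j)R_j^2 I$ yields, on segment $j$, the bound
\[
\tfrac{c}{2}\,n(\tau_{j+1}-\tau_j)R_j^2\Bigl(\epsilon^2\Tr(\Sigmainit) + \|\thetataulock-\tilde\theta_{\tau_j}\|^2\Bigr).
\]
Summing over $j=1,\dots,|\boldsymbol{\tau}|$ and recognizing the two terms gives exactly the claimed inequality.

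The main (minor) obstacle is just bookkeeping: one must be careful that the Taylor expansion is carried out segment by segment so that the gradient truly vanishes via \eqref{eq:theta_tilde}, and that $M$ in the Gaussian-quadratic-form identity is the actual sample Gram matrix before the $R_j^2$ bound is applied, since applying the spectral bound only \emph{after} taking expectation preserves both the trace term (leading to the $\epsilon^2\Tr(\Sigmainit)$ factor) and the bias term (leading to the $\|\thetataulock-\tilde\theta_{\tau_j}\|^2$ factor) with the same prefactor $cn R_j^2(\tau_{j+1}-\tau_j)/2$.
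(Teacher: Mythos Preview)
Your proposal is correct and follows essentially the same approach as the paper: a segment-by-segment second-order Taylor expansion around $\tilde{\theta}_{\tau_j}$ so that the first-order term vanishes by \eqref{eq:theta_tilde}, followed by the curvature bound \eqref{eq:second_derivative_assumption}, the Gaussian bias--variance decomposition of the resulting quadratic form under $\theta_1\sim N(\thetataulock,\epsilon^2\Sigmainit)$, the spectral bound via $R_j^2$, and summation over $j$. The paper's proof differs only cosmetically in how the bias--variance step is written out.
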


\begin{proof}
	Because $\tilde{\theta}_{\tau_j}$ is the minimizer of $\nabla_{\theta}\sum_{t=\tau_{j}}^{\tau_{j+1}-1}\sum_{i=1}^{n}\log p\left(y_{t,i}\mid z_{t,i};\theta\right)$, per Taylor's expansion there is some $\theta_{\mathrm{mid}}$ such that
	\begin{multline*}
	-\sum_{t=\tau_{j}}^{\tau_{j+1}-1}\sum_{i=1}^{n}\log p\left(y_{t,i}\mid z_{t,i};\theta\right) = -\sum_{t=\tau_{j}}^{\tau_{j+1}-1}\sum_{i=1}^{n}\log p\left(y_{t,i}\mid z_{t,i};\tilde{\theta}_{\tau_j}\right)\\
	-\frac{1}{2}\left(\theta-\tilde{\theta}_{\tau_j}\right)^{\top}\left.\nabla_{\theta}^{2}\sum_{t=\tau_{j}}^{\tau_{j+1}-1}\sum_{i=1}^{n}\log p\left(y_{t,i}\mid z_{t,i};\theta\right)\right|_{\theta=\theta_{\mathrm{mid}}}\left(\theta-\tilde{\theta}_{\tau_j}\right).
	\end{multline*}
	
	Following the same arguments as in the proof of Lemma \ref{lemma:Lq_minus_Llocked_general}, we have that 
	\begin{multline*}
	\E_{\tilde{Q}}\left[-\sum_{t=\tau_{j}}^{\tau_{j+1}-1}\sum_{i=1}^{n}\log p\left(y_{t,i}\mid z_{t,i};\theta\right)\right]
	\le-\sum_{t=\tau_{j}}^{\tau_{j+1}-1}\sum_{i=1}^{n}\log p\left(y_{t,i}\mid z_{t,i};\tilde{\theta}_{\tau_j}\right) \\
	+\frac{c}{2} \E_{\tilde{Q}}\left[\left(\theta_1-\tilde{\theta}_{\tau_j}\right)^{\top}\left(\sum_{t=\tau_{j}}^{\tau_{j+1}-1}\sum_{i=1}^{n}z_{t,i}z_{t,i}^{\top}\right)\left(\theta_1-\tilde{\theta}_{\tau_j}\right)\right].
	\end{multline*}
	Taking expectation with respect to $\tilde{Q}$, we note that
	\begin{align*}
	& \E_{\tilde{Q}}\left[\left(\theta_1-\tilde{\theta}_{\tau_j}\right)^{\top} \left(\sum_{t=\tau_{j}}^{\tau_{j+1}-1}\sum_{i=1}^{n}z_{t,i}z_{t,i}^{\top}\right) \left(\theta_1-\tilde{\theta}_{\tau_j}\right)\right]\\
	& = \E_{\tilde{Q}}\left[\left(\theta_1-\thetataulock\right)^{\top} \left(\sum_{t=\tau_{j}}^{\tau_{j+1}-1}\sum_{i=1}^{n}z_{t,i}z_{t,i}^{\top}\right) \left(\theta_1-\thetataulock\right)\right]
	+\left(\thetataulock-\tilde{\theta}_{\tau_j}\right)^{\top}\left(\sum_{t=\tau_{j}}^{\tau_{j+1}-1}\sum_{i=1}^{n}z_{t,i}z_{t,i}^{\top}\right)\left(\thetataulock-\tilde{\theta}_{\tau_j}\right)\\
	& \le \left(\tau_{j+1}-\tau_{j}\right)nR_{j}^{2}\epsilon^{2} \Tr\left(\Sigmainit\right)+\left(\tau_{j+1}-\tau_{j}\right)nR_{j}^{2}\left\Vert \thetataulock-\tilde{\theta}_{\tau_j}\right\Vert ^{2}.
	\end{align*}
	We arrive at our results after summing over all $j = 1,...,|\tau|$.
\end{proof}

\begin{thm}[Type II regret for BLR]
	Assume that there is an $R$ such that $\frac{1}{n(\tau_{j+1}-\tau_{j})}\sum_{t=\tau_{j}}^{\tau_{j+1}-1}\sum_{i=1}^{n}z_{t,i}z_{t,i}^{\top}\preceq R^{2}I$ for all $j\in\{1, 2, \ldots, |\boldsymbol{\tau}|\}$. It holds that
	\begin{align*}
	\Lblr - \Ldyntau & \le\frac{1}{2}\left(\thetataulock-\thetainit\right)^{\top}\Sigmainit^{-1}\left(\thetataulock-\thetainit\right) + \frac{d}{2}\log\left(\frac{d+cnTR^{2}\Tr(\Sigmainit)}{d}\right) \\
	&\quad\quad+ \frac{cnR^2}{2}\sum_{j=1}^{|\tau|}\left(\tau_{j+1}-\tau_{j}\right)\left\Vert \thetataulock-\tilde{\theta}_{\tau_j}\right\Vert^{2}.
	\end{align*}
\end{thm}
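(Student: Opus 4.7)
The plan is to follow the decomposition already laid out in equation~\eqref{eq:blr_decomposition}: $\Lblr - \Ldyntau = (\Lblr - L_{\tilde{Q}}) + (L_{\tilde{Q}} - \Ldyntau)$, bound each summand using the existing lemmas, and then optimize over the free parameter $\epsilon^2$.

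For the first summand, I would invoke Lemma~\ref{lemma:variational_bound_general} with $Q = \tilde{Q} = \Qtaulocked$. The key observation is that BLR corresponds to $\alpha=0$ and $\delta^2 = 0$, so the BLR prior $p_0$ puts all of its mass on $\boldsymbol{\tau} = \taulock$. Since $\tilde{Q}$ is also supported entirely on $\boldsymbol{\tau} = \taulock$, the second KL term in Lemma~\ref{lemma:variational_bound_general} vanishes, and we are left only with the conditional KL between two Gaussians on $\R^d$: $N(\thetataulock, \epsilon^2\Sigmainit)$ and $N(\thetainit, \Sigmainit)$. This is exactly the $|\boldsymbol{\tau}| = 1$ specialization of Lemma~\ref{lemma:KL_gauss_rand_walk} (with $\mu_1 = \thetataulock$, $\epsilon_1 = \epsilon$, and no random-walk terms), yielding
\[
\Lblr - L_{\tilde{Q}} \le \tfrac{1}{2}\epsilon^2 d + \tfrac{1}{2}(\thetataulock - \thetainit)^\top \Sigmainit^{-1}(\thetataulock - \thetainit) - d\log\epsilon - \tfrac{d}{2}.
\]

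For the second summand, I would apply Lemma~\ref{lemma:blr} directly, using the uniform bound $R_j \le R$ for all $j$, which implies $\sum_{j=1}^{|\boldsymbol{\tau}|} R_j^2(\tau_{j+1} - \tau_j) \le R^2 T$. This gives
\[
L_{\tilde{Q}} - \Ldyntau \le \tfrac{cnTR^2}{2}\epsilon^2 \Tr(\Sigmainit) + \tfrac{cnR^2}{2}\sum_{j=1}^{|\boldsymbol{\tau}|}(\tau_{j+1} - \tau_j)\|\thetataulock - \tilde{\theta}_{\tau_j}\|^2.
\]

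Adding the two bounds, the last term (depending on the shift-reference gaps $\|\thetataulock - \tilde{\theta}_{\tau_j}\|^2$) is already in the desired form, so it remains to minimize the $\epsilon$-dependent piece
\[
\tfrac{d}{2}\epsilon^2 - d\log\epsilon - \tfrac{d}{2} + \tfrac{cnTR^2}{2}\epsilon^2\Tr(\Sigmainit).
\]
Setting the derivative in $\epsilon$ to zero gives the optimum $\epsilon^{*2} = d / (d + cnTR^2\Tr(\Sigmainit))$. Substituting back collapses the two quadratic $\epsilon^2$ terms into $d/2$, cancels the $-d/2$, and turns $-d\log\epsilon^*$ into $\tfrac{d}{2}\log((d + cnTR^2\Tr(\Sigmainit))/d)$, reproducing the claimed bound together with the Mahalanobis term $\tfrac12(\thetataulock-\thetainit)^\top\Sigmainit^{-1}(\thetataulock-\thetainit)$.

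There is no real obstacle here beyond careful bookkeeping: both components are off-the-shelf from the preceding lemmas, and the only subtle point is recognizing that for BLR the prior is degenerate at $\taulock$ so the $\KL(Q(\boldsymbol{\tau}) \| p_0(\boldsymbol{\tau}))$ term vanishes (we must choose $\tilde{Q}$ matching this support, which is exactly the point of using $\Qtaulocked$). The optimization over $\epsilon$ is a one-variable calculus exercise and can be stated at the end to produce the logarithmic factor.
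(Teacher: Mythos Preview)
Your proposal is correct and follows essentially the same route as the paper: the decomposition \eqref{eq:blr_decomposition}, Lemmas~\ref{lemma:variational_bound_general} and~\ref{lemma:KL_gauss_rand_walk} (specialized to $|\boldsymbol{\tau}|=1$ with the observation that $p_0(\taulock)=1$ under BLR) for the first summand, Lemma~\ref{lemma:blr} with the uniform bound $R_j\le R$ for the second, and the same optimization over $\epsilon^2$.
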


\begin{proof}
	To bound the first summand of decomposition (\ref{eq:blr_decomposition}), we use Lemmas~\ref{lemma:variational_bound_general} and \ref{lemma:KL_gauss_rand_walk} and the fact that $p_0(\taulock) = 1$ under BLR. We use Lemma \ref{lemma:blr} to bound the second summand of decomposition (\ref{eq:blr_decomposition}). Thus, we obtain
	\begin{align*}
	\Lblr - \Llocked \le \frac{1}{2}\epsilon^2 d
	&+ \frac{1}{2}\left(\tilde{\theta}_{\tau_{locked}}-\thetainit\right)^{\top}\Sigma^{-1}\left(\tilde{\theta}_{\tau_{locked}}-\thetainit\right) + \frac{d}{2} - d \log(\epsilon) + \frac{cnTR^2}{2} \epsilon^2 \Tr(\Sigmainit) \\
	&+ \frac{cnR}{2}\sum_{j=1}^{|\tau|}\left(\tau_{j+1}-\tau_{j}\right)\left\Vert \tilde{\theta}_{\tau_{locked}}-\tilde{\theta}_{\tau,j}\right\Vert ^{2}.
	\end{align*}
	Choosing $\epsilon^2 = \frac{d}{d + cnTR^2\Tr(\Sigmainit)}$ will minimize this expression, which yields the desired conclusion.
\end{proof}

\subsection{Type II $\boldsymbol{\tau}$-regret results for MarBLR}

As before, we bound the difference in the cumulative negative log-likelihood,
$\Lbf-\Ldyntau$, by breaking it into two summands
\begin{equation}
\Lbf-\Ldyntau=\left(\Lbf-L_{Q_{\boldsymbol{\tau}',\tilde{\boldsymbol{\theta}}',\boldsymbol{\epsilon}^2 \Sigmainit}}\right)+\left(L_{Q_{\boldsymbol{\tau}',\tilde{\boldsymbol{\theta}}',\boldsymbol{\epsilon}^2 \Sigmainit}}-\Ldyntau\right).
\label{eq:type2_MarBLR_decomp}
\end{equation}
Thus the proof proceeds by comparing against an intermediary distribution $Q_{\boldsymbol{\tau}', \tilde{\boldsymbol{\theta}}',\boldsymbol{\epsilon}^2 \Sigmainit}$ defined per \eqref{eq:Qtaumusigma}, where $\boldsymbol{\tau}'$ be any subsequence of $\boldsymbol{\tau}$ with $\tau'_{1}=1$, $\tilde{\boldsymbol{\theta}}' := (\tilde{\theta}_t)_{t\in\boldsymbol{\tau}'}$, and $\boldsymbol{\epsilon}^2 = \left(\epsilon_1^2, \epsilon_2^2, \ldots, \epsilon_{|\boldsymbol{\tau}'|}^2 \right)$.
This intermediary distribution is centered around a dynamic oracle that may evolve slower than than the specified update times $\boldsymbol{\tau}$.
The final Type II regret bound will depend on $\boldsymbol{\tau}'$. 
Optimizing our choice of $\boldsymbol{\tau}'$ can lead to tighter Type Ii regret bounds, particularly when $\alpha$ in the MarBLR prior is small and $|\tau|$ is large.

We use the following lemma to bound the first summand of \eqref{eq:type2_MarBLR_decomp}.

\begin{lem}\label{lemma:KL_Q_p0}
	Consider the distribution $Q_{\boldsymbol{\tau},\boldsymbol{\mu}, \boldsymbol{\epsilon}^2 \Sigmainit}$ as defined above, and the MarBLR prior $p_0$ as defined per \eqref{eq:MarBLR_prior_theta_1} and \eqref{eq:MarBLR_prior_theta_t}.
	For any $\boldsymbol{\tau}$, $\boldsymbol{\mu}$ and $\boldsymbol{\epsilon}^2$, we have that
	
	\begin{align*}
	& KL\left(Q_{\boldsymbol{\tau},\boldsymbol{\mu}, \boldsymbol{\epsilon}^2 \Sigmainit} \mid\mid p_{0}\left(\boldsymbol{\theta}\mid\tau\right)\right) \\
	& = \frac{d}{2}\epsilon_1^2 + \frac{1}{2} \left(\mu_1 - \thetainit\right)^\top \Sigmainit^{-1} \left(\mu_1 - \thetainit\right)
	-d\log \epsilon_1 - \frac{d}{2}|\boldsymbol{\tau}| + (|\boldsymbol{\tau}| - 1) d \log \delta \\
	& + \sum_{t=2}^{|\boldsymbol{\tau}|} \left[ \frac{1}{2\delta^2} \left(d\left(\epsilon_{t-1}^2 + \epsilon_{t}^2\right) + \left(\mu_t - \mu_{t-1}\right)^\top \Sigmainit^{-1} \left(\mu_t - \mu_{t-1}\right)\right) - d\log \epsilon_t\right].
	\end{align*}
\end{lem}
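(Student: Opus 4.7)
The plan is to reduce the KL computation to a joint Gaussian KL on the $d|\boldsymbol{\tau}|$-dimensional vector $\left(\theta_{\tau_1}, \ldots, \theta_{\tau_{|\boldsymbol{\tau}|}}\right)$. Since both distributions are supported on sequences that are constant on each interval $\left[\tau_j, \tau_{j+1}-1\right]$, the Radon--Nikodym derivative depends only on the values at the shift times (exactly the reduction used in the proof of Lemma~\ref{lemma:KL_gauss_rand_walk}). Let $\Qsub$ and $\psub$ denote the resulting densities on this finite-dimensional space.

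Next I would apply the chain rule for KL divergence along the shift-time index. This yields
\begin{align*}
\KL\!\left(\Qsub \,\|\, \psub\right)
&= \KL\!\left(Q(\theta_{\tau_1}) \,\|\, p_0(\theta_{\tau_1})\right)
+ \sum_{j=2}^{|\boldsymbol{\tau}|}
\E_{Q}\!\left[
\KL\!\left(Q(\theta_{\tau_j}\mid \theta_{\tau_{j-1}}) \,\|\, p_0(\theta_{\tau_j}\mid \theta_{\tau_{j-1}})\right)
\right].
\end{align*}
Under $Q$ the $\theta_{\tau_j}$ are jointly independent, so $Q(\theta_{\tau_j}\mid\theta_{\tau_{j-1}})=N(\mu_j,\epsilon_j^2 \Sigmainit)$. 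Under $p_0(\cdot\mid\boldsymbol{\tau})$, writing $\theta_{\tau_j}=\theta_{\tau_{j-1}}+\beta_{\tau_j}$ with $\beta_{\tau_j}\sim N(0,\delta^2\Sigmainit)$ gives $p_0(\theta_{\tau_j}\mid\theta_{\tau_{j-1}})=N(\theta_{\tau_{j-1}},\delta^2\Sigmainit)$.

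The first term is the KL between $N(\mu_1,\epsilon_1^2\Sigmainit)$ and $N(\thetainit,\Sigmainit)$; plugging into the standard Gaussian KL formula (using that $(\Sigmainit)^{-1}(\epsilon_1^2\Sigmainit)=\epsilon_1^2 I$ so the trace is $d\epsilon_1^2$, and $\log|\Sigmainit|-\log|\epsilon_1^2\Sigmainit|=-2d\log\epsilon_1$) gives exactly the first line of the claimed expression. For each $j\ge 2$, the Gaussian KL between $N(\mu_j,\epsilon_j^2\Sigmainit)$ and $N(\theta_{\tau_{j-1}},\delta^2\Sigmainit)$ contributes a constant part $\tfrac{d\epsilon_j^2}{2\delta^2}-\tfrac{d}{2}+d\log\delta-d\log\epsilon_j$ plus the quadratic $\tfrac{1}{2\delta^2}(\theta_{\tau_{j-1}}-\mu_j)^\top\Sigmainit^{-1}(\theta_{\tau_{j-1}}-\mu_j)$. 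Taking the expectation under $Q$ over $\theta_{\tau_{j-1}}\sim N(\mu_{j-1},\epsilon_{j-1}^2\Sigmainit)$, the usual bias--variance identity yields
\begin{align*}
\E_{Q}\!\left[(\theta_{\tau_{j-1}}-\mu_j)^\top\Sigmainit^{-1}(\theta_{\tau_{j-1}}-\mu_j)\right]
= d\epsilon_{j-1}^2 + (\mu_j-\mu_{j-1})^\top\Sigmainit^{-1}(\mu_j-\mu_{j-1}),
\end{align*}
because the cross term vanishes and $\Tr(\Sigmainit^{-1}\,\epsilon_{j-1}^2\Sigmainit)=d\epsilon_{j-1}^2$.

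Summing the first-term contribution with the $|\boldsymbol{\tau}|-1$ conditional contributions, the constant pieces collect to $-\tfrac{d|\boldsymbol{\tau}|}{2}+(|\boldsymbol{\tau}|-1)d\log\delta$, the $-d\log\epsilon_j$ pieces combine into the logarithmic terms in the sum, and the quadratic pieces are exactly the $\tfrac{1}{2\delta^2}(d(\epsilon_{j-1}^2+\epsilon_j^2)+(\mu_j-\mu_{j-1})^\top\Sigmainit^{-1}(\mu_j-\mu_{j-1}))$ terms in the lemma statement, finishing the proof. There is no real obstacle here beyond careful bookkeeping: the only delicate steps are (i) recognizing that the $Q$-conditional collapses to the $Q$-marginal since the $\theta_{\tau_j}$ are jointly independent under $Q$, and (ii) keeping track of which $\log\epsilon$ and $-d/2$ terms come from the marginal versus the conditionals when totaling the expression.
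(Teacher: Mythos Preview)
Your proof is correct and follows essentially the same approach as the paper: both reduce to the $|\boldsymbol{\tau}|$-dimensional Gaussian problem, decompose the KL via the chain rule (exploiting independence in $Q$ and the Markov structure of $p_0(\cdot\mid\boldsymbol{\tau})$), and evaluate each term using standard Gaussian moment identities. The only cosmetic difference is that the paper computes each cross term by directly noting $(\theta_{\tau_j}-\theta_{\tau_{j-1}})\sim N\!\left(\mu_j-\mu_{j-1},(\epsilon_{j-1}^2+\epsilon_j^2)\Sigmainit\right)$ under $Q$, whereas you apply the Gaussian KL formula and then take expectation via bias--variance; these are equivalent bookkeeping choices.
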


\begin{proof}
	We define $\Theta_{J}$ and $\psub$ as in Lemma~\ref{lemma:KL_gauss_rand_walk}.
	We define $\Qsub$ as the distribution over $\Theta_{J}$ as defined by $Q_{\boldsymbol{\tau},\boldsymbol{\mu}, \boldsymbol{\epsilon}^2 \Sigmainit}$.
	We have that
	\begin{align}
	KL\left(Q_{\tau,(\mu_{t})_{t\in\tau}}\mid\mid p_{0}\left(\boldsymbol{\theta}\mid\tau\right)\right) &= KL\left(\Qsub\mid\mid \psub\right) \nonumber \\
	& =\int\cdots\int \Qsub(\boldsymbol{\theta}) \sum_{t=1}^{|\boldsymbol{\tau}|}\log\frac{\Qsub\left(\theta_{t}\right)}{\psub\left(\theta_{t}\mid\theta_{t-1}\right)}d\theta_{|\boldsymbol{\tau}|}\cdots d\theta_{1}, \label{eq:kl_integral}
	\end{align}
	because $\theta_{t}$ in $\Qsub$
	are jointly independent and $\theta_{t}$ in $\psub$ only depend
	on $\theta_{t-1}$.
	As such,
	\begin{align}
	KL\left(\Qsub\mid\mid \psub\right) & =\int \Qsub\left(\theta_{1}\right)\log\frac{\Qsub\left(\theta_{1}\right)}{\psub\left(\theta_{1}\right)}d\theta_{1} \label{eq:kl_summand_1}\\
	&\quad +\sum_{t=2}^{|\boldsymbol{\tau}|}\int\int \Qsub\left(\theta_{t-1},\theta_{t}\right)\log\frac{\Qsub\left(\theta_{t}\right)}{\psub\left(\theta_{t}\mid\theta_{t-1}\right)}d\theta_{t}d\theta_{t-1}. \label{eq:kl_summand_2}
	\end{align}
	The first term \eqref{eq:kl_summand_1} is the KL divergence of two multivariate Normal distributions, $N(\mu_1, \epsilon_1^2 \Sigmainit)$ and $N(\thetainit, \Sigmainit)$, and can be shown to be equal to
	\begin{equation}
	\int \Qsub\left(\theta_{1}\right)\log\frac{\Qsub\left(\theta_{1}\right)}{\psub\left(\theta_{1}\right)}d\theta_{1}
	= \frac{1}{2}\epsilon_1^{2}d+\frac{1}{2}\left(\mu_1-\thetainit\right)^{\top}\Sigmainit^{-1}\left(\mu_1-\thetainit\right)-d\log\epsilon_1-\frac{d}{2}.
	\label{eq:KL_intermed_qsub_1}
	\end{equation}
	
	Next each term in the summation of \eqref{eq:kl_summand_2} is equal to
	\begin{align}
	& \int \int \Qsub({\theta}_{t-1}, {\theta}_{t}) 
	\log \frac{\Qsub({\theta}_{t})}{\psub({\theta}_{t}\mid {\theta}_{t-1})}
	\mathrm{d}{\theta}_{t} \mathrm{d}{\theta}_{t-1} \nonumber \\
	= &
	d \log\frac{\delta}{\epsilon_t}
	+\frac{1}{2\delta^{2}}
	\int \int \Qsub({\theta}_{t-1}, {\theta}_{t}) 
	\left(
	{\theta}_{t} -  {\theta}_{t-1}
	\right)^\top
	\Sigmainit^{-1}
	\left(
	{\theta}_{t} -  {\theta}_{t-1}
	\right)
	\mathrm{d}{\theta}_{t} \mathrm{d}{\theta}_{t-1}
	-\frac{d}{2}.
	\label{eq:qq_intermed_qsub}
	\end{align}
	
	We note that under $\Qsub$ it holds that
	\begin{equation*}
	(\theta_{t}-\theta_{t-1}) \sim N\left(\mu_{t}-\mu_{t-1}, (\epsilon_{t-1}^{2}+\epsilon_{t}^{2})\Sigmainit\right).
	\end{equation*}
	
	Therefore, \eqref{eq:qq_intermed_qsub} simplifies to
	\begin{align}
	& \int \int \Qsub({\theta}_{t-1}, {\theta}_{t}) 
	\log \frac{\Qsub({\theta}_{t})}{\psub({\theta}_{t}\mid {\theta}_{t-1})}
	\mathrm{d}{\theta}_{t} \mathrm{d}{\theta}_{t-1} \nonumber \\
	= & d \log\frac{\delta}{\epsilon_t}
	+\frac{1}{2\delta^{2}}
	\left(
	d\left(\epsilon_{t-1}^{2}+\epsilon_{t}^{2}\right) + (\mu_{t}-\mu_{t-1})^\top \Sigmainit^{-1} (\mu_{t}-\mu_{t-1})
	\right)
	-\frac{d}{2}.
	\label{eq:KL_intermed_qsub_2}
	\end{align}
	
	Plugging \eqref{eq:KL_intermed_qsub_1} and \eqref{eq:KL_intermed_qsub_2} into \eqref{eq:kl_summand_1} and \eqref{eq:kl_summand_2} gives us the desired result.
\end{proof}

Next we need to bound the second summand of \eqref{eq:type2_MarBLR_decomp}.

\begin{lem}\label{lemma:Lq_minus_Ldyn}
	Suppose there is a constant $c$ that bounds the second derivative as in \eqref{eq:second_derivative_assumption}.
	Assume that there is an $R$ such that $\frac{1}{n(\tau_{j+1}-\tau_{j})}\sum_{t=\tau_{j}}^{\tau_{j+1}-1}\sum_{i=1}^{n}z_{t,i}z_{t,i}^{\top}\preceq R^{2}I$ for all $j\in\{1, 2, \ldots, |\boldsymbol{\tau}|\}$.
	Then it holds that
	\begin{align*}
	L_{Q_{\boldsymbol{\tau}',\tilde{\boldsymbol{\theta}}',\boldsymbol{\epsilon}^2 \Sigmainit}}-\Ldyntau
	& \le \frac{1}{2} c nR^{2} \sum_{j=1}^{|\boldsymbol{\tau}|} (\tau_{j + 1} - \tau_{j})
	\left( \epsilon_{k(j)}^2 \Tr(\Sigmainit) + \left\| \tilde{\theta}_{\tau'_{k(j)}}-\tilde{\theta}_{\tau_{j}} \right\|^{2} \right)
	\end{align*}
	where $k(j) := \max \{k : \tau'_k \leq \tau_j\}$.
\end{lem}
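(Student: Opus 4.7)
The plan is to mirror the proof of Lemma~\ref{lemma:blr} (and Lemma~\ref{lemma:Lq_minus_Llocked_general}) with the appropriate change of anchor. The core idea is a segmentwise second-order Taylor expansion of the negative log-likelihood around the oracle parameter $\tilde{\theta}_{\tau_j}$ defined in \eqref{eq:theta_tilde}, coupled with the quadratic bound provided by the second-derivative assumption \eqref{eq:second_derivative_assumption} and the eigenvalue bound $R^2$.

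First I would break both $L_{Q_{\boldsymbol{\tau}',\tilde{\boldsymbol{\theta}}',\boldsymbol{\epsilon}^2 \Sigmainit}}$ and $\Ldyntau$ into $|\boldsymbol{\tau}|$ summands along the finer segmentation $[\tau_j,\tau_{j+1})$, since this is the scale at which $\tilde{\theta}_{\tau_j}$ is a first-order minimizer. The key bookkeeping step is to identify, on each such segment, the law of $\theta_t$ under the intermediary distribution $Q_{\boldsymbol{\tau}',\tilde{\boldsymbol{\theta}}',\boldsymbol{\epsilon}^2 \Sigmainit}$. Because $\boldsymbol{\tau}' \subseteq \boldsymbol{\tau}$ and $\tau'_1 = 1$, the segment $[\tau_j,\tau_{j+1})$ sits entirely inside a unique segment $[\tau'_{k(j)},\tau'_{k(j)+1})$ of $\boldsymbol{\tau}'$, and therefore $\theta_t$ is constant on it and equal to $\theta_{\tau'_{k(j)}} \sim N\!\left(\tilde{\theta}_{\tau'_{k(j)}}, \epsilon_{k(j)}^2 \Sigmainit\right)$.

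Once this identification is made, the argument on segment $j$ is a direct adaptation of the proof of Lemma~\ref{lemma:blr}: expand $-\sum_{t=\tau_j}^{\tau_{j+1}-1}\sum_{i=1}^{n}\log p(y_{t,i}\mid z_{t,i};\theta_{\tau'_{k(j)}})$ around $\tilde{\theta}_{\tau_j}$, use the first-order optimality in \eqref{eq:theta_tilde} to kill the linear term, and apply the now-standard reduction
\[
\left|\tfrac{1}{2}\bigl(\theta-\tilde{\theta}_{\tau_j}\bigr)^{\top}\!\!\left.\nabla_{\theta}^{2}\!\sum_{t=\tau_j}^{\tau_{j+1}-1}\sum_{i=1}^{n}\log p(y_{t,i}\mid z_{t,i};\theta)\right|_{\theta=\theta_{\mathrm{mid}}}\!\!\bigl(\theta-\tilde{\theta}_{\tau_j}\bigr)\right|\le \tfrac{c}{2}\, n R^{2}(\tau_{j+1}-\tau_j)\,\bigl\|\theta-\tilde{\theta}_{\tau_j}\bigr\|^{2}.
\]
Taking expectation under $Q$ with the bias--variance identity
\[
\E_Q\bigl\|\theta_{\tau'_{k(j)}}-\tilde{\theta}_{\tau_j}\bigr\|^{2}=\epsilon_{k(j)}^{2}\,\Tr(\Sigmainit)+\bigl\|\tilde{\theta}_{\tau'_{k(j)}}-\tilde{\theta}_{\tau_j}\bigr\|^{2}
\]
produces exactly the summand in the claimed bound, and summing over $j=1,\ldots,|\boldsymbol{\tau}|$ finishes the proof.

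The main subtlety, rather than obstacle, is precisely the indexing via $k(j)$: one must be careful to attribute to each $\tau$-segment both the correct Gaussian anchor $\tilde{\theta}_{\tau'_{k(j)}}$ and the variance scale $\epsilon_{k(j)}^{2}$, and to verify that the interval $[\tau_j,\tau_{j+1})$ does not straddle a shift time of $\boldsymbol{\tau}'$ (which it cannot, since $\boldsymbol{\tau}'\subseteq\boldsymbol{\tau}$). With that in place, the rest is a verbatim reuse of the Taylor-expansion machinery from the earlier lemmas; in particular, taking $\boldsymbol{\tau}'=\taulock$ recovers Lemma~\ref{lemma:blr} as the special case with a single anchor.
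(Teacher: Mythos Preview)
Your proposal is correct and follows essentially the same approach as the paper's proof: decompose along the finer $\boldsymbol{\tau}$-segmentation, note that $\boldsymbol{\tau}'\subseteq\boldsymbol{\tau}$ forces $\theta_t\equiv\theta_{\tau'_{k(j)}}$ on each segment, Taylor-expand around $\tilde{\theta}_{\tau_j}$ so the linear term vanishes by \eqref{eq:theta_tilde}, bound the quadratic term via \eqref{eq:second_derivative_assumption} and the $R^2$ assumption, and then use the bias--variance identity for $\E_Q\|\theta_{\tau'_{k(j)}}-\tilde{\theta}_{\tau_j}\|^2$ before summing over $j$. The indexing subtlety you flag is precisely the one point the paper also takes care to spell out.
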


\begin{proof}
	For the ease of notation denote $\tilde{Q} := Q_{\boldsymbol{\tau}',\tilde{\boldsymbol{\theta}}',\boldsymbol{\epsilon}^2 \Sigmainit}$.
	It holds that
	\begin{align*}
	&L_{\tilde{Q}}-\Ldyntau \\
	&= \sum_{j=1}^{|\boldsymbol{\tau}|}\left(E_{\tilde{Q}}\left[\sum_{t=\tau_{j}}^{\tau_{j+1}-1}\sum_{i=1}^{n}-\log p\left(y_{t,i}\mid z_{t,i};\theta_{t}\right)+\log p\left(y_{t,i}\mid z_{t,i};\tilde{\theta}_{\tau_{j}}\right)\right]\right)
	\end{align*}
	
	Recall that for any sequence $\boldsymbol{\theta}$ drawn from $\tilde{Q}$, for any $j = 1, \ldots, |\boldsymbol{\tau}|$, the parameters $\theta_{t}$ are constant over $t=\tau'_{j}, \ldots, \tau'_{j+1}-1$.
	Taking a Taylor expansion, there exists some $\theta_{mid}$ such that
	
	\begin{align}
	\begin{split}
	-\sum_{t=\tau_{j}}^{\tau_{j+1}-1}\sum_{i=1}^{n}\log p\left(y_{t,i} \middle| z_{t,i};\theta\right) 
	&=-\sum_{t=\tau_{j}}^{\tau_{j+1}-1}\sum_{i=1}^{n}\log p\left(y_{t,i} \middle| z_{t,i};\tilde{\theta}_{\tau_{j}}\right) \\
	&\quad\quad -\left.\nabla_{\theta}\sum_{t=\tau_{j}}^{\tau_{j+1}-1}\sum_{i=1}^{n}\log p\left(y_{t,i}\middle| z_{t,i};\theta\right)\right|_{\theta=\tilde{\theta}_{\tau_{j}}}^{\top}\left(\theta-\tilde{\theta}_{\tau_{j}}\right)  \\
	&\quad\quad -\frac{1}{2}\left(\theta-\tilde{\theta}_{\tau_{j}}\right)^{\top}\left.\nabla_{\theta}^{2}\sum_{t=\tau_{j}}^{\tau_{j+1}-1}\sum_{i=1}^{n}\log p\left(y_{t,i}\middle| z_{t,i};\theta\right)\right|_{\theta=\theta_{mid}}\left(\theta-\tilde{\theta}_{\tau_{j}}\right). \label{eq:MarBLR_taylor}
	\end{split}
	\end{align}
	
	Since $\boldsymbol{\tau}'$ is a subsequence of $\boldsymbol{\tau}$, for $\boldsymbol{\theta} \sim \tilde{Q}$ we have that $\theta_t = \theta_{\tau'_{k(j)}}$ for all $t=\tau_{j},....,\tau_{j+1}-1$, where $k(j) := \max \{k : \tau'_k \leq \tau_j\}$. Thus, we can use the above decomposition to evaluate \eqref{eq:MarBLR_taylor} with $\theta_t = \theta_{\tau'_{k_j}}$ in place of $\theta$.
	
	By the definition of $\tilde{\theta}_{\tau_{j}}$, the gradient in the expression above is zero, so the second term is equal to zero. Because we assumed the second derivative was bounded by $c$ as in \eqref{eq:second_derivative_assumption}, the expression simplifies to the bound
	\begin{align*}
	-\sum_{t=\tau_{j}}^{\tau_{j+1}-1}\sum_{i=1}^{n}\log p\left(y_{t,i}\mid z_{t,i};\theta_t\right) & \le-\sum_{t=\tau_{j}}^{\tau_{j+1}-1}\sum_{i=1}^{n}\log p\left(y_{t,i}\mid z_{t,i};\tilde{\theta}_{\tau_{j}}\right) \\
	&\quad + \frac{c}{2}\left(\theta_{\tau'_{k(j)}}-\tilde{\theta}_{\tau_{j}}\right)^\top \left(\sum_{t=\tau_j}^{\tau_{j+1}-1}\sum_{i=1}^{n}z_{t,i}z_{t,i}^{\top}\right)\left(\theta_{\tau'_{k(j)}}-\tilde{\theta}_{\tau_{j}}\right).
	\end{align*}
	
	Assuming there exists some $R^2$ that satisfies the lemma assumptions, it follows that
	\begin{align*}
	&\E_{\tilde{Q}}\left[\left( \theta_{\tau'_{k(j)}}-\tilde{\theta}_{\tau_{j}} \right)^{\top} \left(\sum_{t=\tau_j}^{\tau_{j + 1} - 1}\sum_{i=1}^{n} z_{t,i} z_{t,i}^{\top}\right)\left( \theta_{\tau'_{k(j)}}-\tilde{\theta}_{\tau_{j}} \right)\right] \\
	&\le (\tau_{j + 1} - \tau_{j})nR^{2}  \E_{\tilde{Q}}\left\| \theta_{\tau'_{k(j)}}-\tilde{\theta}_{\tau_{j}} \right\|^{2} \\
	&= (\tau_{j + 1} - \tau_{j})nR^{2} \left( \epsilon_{k(j)}^2 \Tr(\Sigmainit) + \left\| \tilde{\theta}_{\tau'_{k(j)}}-\tilde{\theta}_{\tau_{j}} \right\|^{2} \right).
	\end{align*}
	
	We finish the proof by summing over $j$.
\end{proof}

We combine the results to get the following bound.

\begin{thm}[Type II regret for MarBLR] \label{thm:type2_regret_MarBLR}
	Suppose there is a constant $c$ that bounds the second derivative as in \eqref{eq:second_derivative_assumption}.
	Assume that there is an $R$ such that $\frac{1}{n(\tau_{j+1}-\tau_{j})}\sum_{t=\tau_{j}}^{\tau_{j+1}-1}\sum_{i=1}^{n}z_{t,i}z_{t,i}^{\top}\preceq R^{2}I$ for all $j\in\{1, 2, \ldots, |\boldsymbol{\tau}|\}$.
	Let $\boldsymbol{\tau}'$ be any subsequence of the sequence of shift times $\boldsymbol{\tau}$.
	Then it holds that
	\begin{align*}
	&\LMarBLR-\Ldyntau \\
	&\leq \frac{1}{2} \left(\tilde{\theta}_1 - \thetainit\right)^\top \Sigmainit^{-1} \left(\tilde{\theta}_1 - \thetainit\right)
	+ \frac{d}{2} \log\left(1 + \frac{1}{\delta^2} + \frac{cnR^2 \Tr(\Sigmainit)\left( \tau'_2 - \tau'_1 \right)}{d}\right) \\
	& + \frac{1}{2} \sum_{t=2}^{|\boldsymbol{\tau}'|} \left[ \frac{1}{\delta^2} \left(\tilde{\theta}_{\tau'_t} - \tilde{\theta}_{\tau'_{t-1}}\right)^\top \Sigmainit^{-1} \left(\tilde{\theta}_{\tau'_t} - \tilde{\theta}_{\tau'_{t-1}}\right)
	+ d \log\left(\frac{2}{\delta^2} + \frac{cnR^2 \Tr(\Sigmainit) \left(\tau'_{j+1} - \tau'_j\right)}{d}\right) \right] \\
	&  - \log p_0(\boldsymbol{\tau}') + (|\boldsymbol{\tau}'| - 1) d \log \delta
	+ \frac{1}{2} c nR^{2} \sum_{j=1}^{|\boldsymbol{\tau}|} (\tau_{j + 1} - \tau_{j})
	\left\| \tilde{\theta}_{\tau'_{k(j)}}-\tilde{\theta}_{\tau_{j}} \right\|^{2}.
	\end{align*}
\end{thm}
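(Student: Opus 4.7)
The plan is to follow the decomposition \eqref{eq:type2_MarBLR_decomp} and combine the variational bound with Lemmas~\ref{lemma:variational_bound_general}, \ref{lemma:KL_Q_p0}, and \ref{lemma:Lq_minus_Ldyn}, finishing with a term-by-term minimization over the nuisance parameters $\boldsymbol{\epsilon}^2 = (\epsilon_1^2, \ldots, \epsilon_{|\boldsymbol{\tau}'|}^2)$.

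First, I would bound the first summand $\Lbf - L_{Q_{\boldsymbol{\tau}', \tilde{\boldsymbol{\theta}}', \boldsymbol{\epsilon}^2\Sigmainit}}$ by applying Lemma~\ref{lemma:variational_bound_general} with $Q$ being the distribution whose marginal on shift times is a point mass at the chosen subsequence $\boldsymbol{\tau}'$ and whose conditional is $Q_{\boldsymbol{\tau}', \tilde{\boldsymbol{\theta}}', \boldsymbol{\epsilon}^2 \Sigmainit}$. With this choice, $\KL(Q(\boldsymbol{\tau}) \,\|\, p_0(\boldsymbol{\tau})) = -\log p_0(\boldsymbol{\tau}')$, contributing one of the terms in the target bound, while the conditional KL is exactly evaluated by Lemma~\ref{lemma:KL_Q_p0} with $\boldsymbol{\mu} = \tilde{\boldsymbol{\theta}}'$, producing the quadratic form $\frac{1}{2}(\tilde{\theta}_1 - \thetainit)^\top \Sigmainit^{-1}(\tilde{\theta}_1 - \thetainit)$, the cross-term $\frac{1}{2\delta^2}(\tilde{\theta}_{\tau'_t} - \tilde{\theta}_{\tau'_{t-1}})^\top \Sigmainit^{-1}(\tilde{\theta}_{\tau'_t} - \tilde{\theta}_{\tau'_{t-1}})$ for each $t \ge 2$, the constant $(|\boldsymbol{\tau}'| - 1) d \log \delta$, and the $\epsilon_t^2$-dependent quantities $\frac{d}{2}\epsilon_1^2$, $\frac{d}{2\delta^2}(\epsilon_{t-1}^2 + \epsilon_t^2)$, and $-d\log\epsilon_t$.

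Next, I would apply Lemma~\ref{lemma:Lq_minus_Ldyn} directly to the second summand $L_{Q_{\boldsymbol{\tau}',\tilde{\boldsymbol{\theta}}',\boldsymbol{\epsilon}^2 \Sigmainit}} - \Ldyntau$. This produces the oracle-gap term $\frac{1}{2}cnR^2 \sum_j (\tau_{j+1}-\tau_j)\|\tilde{\theta}_{\tau'_{k(j)}} - \tilde{\theta}_{\tau_j}\|^2$ verbatim, plus $\epsilon_t^2$-dependent contributions of the form $\frac{1}{2}cnR^2 \epsilon_{k(j)}^2 \Tr(\Sigmainit)(\tau_{j+1} - \tau_j)$. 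Regrouping the last sum by the value of $k(j)$ collapses $\sum_{j: k(j)=t}(\tau_{j+1} - \tau_j)$ into $\tau'_{t+1} - \tau'_t$, so the total coefficient of $\epsilon_t^2$ picks up $\frac{cnR^2}{2}(\tau'_{t+1} - \tau'_t)\Tr(\Sigmainit)$.

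Finally, I would minimize over $\boldsymbol{\epsilon}^2$. The bound decouples into independent one-dimensional problems of the form $a_t x - \frac{d}{2}\log x$ in $x = \epsilon_t^2$, minimized at $x^\ast = d/(2a_t)$ with minimum value $\frac{d}{2} + \frac{d}{2}\log(2a_t/d)$. For $t = 1$ the relevant coefficient is $a_1 = \frac{d}{2}(1 + 1/\delta^2) + \frac{cnR^2}{2}(\tau'_2 - \tau'_1)\Tr(\Sigmainit)$, yielding the first logarithmic term; for $t \ge 2$ the coefficient is $a_t = \frac{d}{\delta^2} + \frac{cnR^2}{2}(\tau'_{t+1} - \tau'_t)\Tr(\Sigmainit)$ (doubled $1/\delta^2$ since $\epsilon_t^2$ appears in both summand $t$ and $t+1$ of the KL expression), producing the remaining logarithmic terms. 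Summing everything and cancelling the $\frac{d}{2}|\boldsymbol{\tau}'|$ constants from Lemma~\ref{lemma:KL_Q_p0} against the $\frac{d}{2}$ additive terms from each minimization gives the stated bound.

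The main obstacle is bookkeeping rather than conceptual: carefully tracking which $\epsilon_t^2$ appears in which terms (noting the asymmetry between $\epsilon_1^2$, interior $\epsilon_t^2$, and possibly $\epsilon_{|\boldsymbol{\tau}'|}^2$), correctly reindexing the sum from $j$ to $t$ via the $k(j)$ map, and verifying that the minimization separates cleanly. One needs to be careful that the $\epsilon_1^2$ coefficient differs from interior coefficients because the first step of the KL uses the $\frac{d}{2}\epsilon_1^2$ prior-matching term rather than a pair of $\frac{d}{2\delta^2}$ contributions.
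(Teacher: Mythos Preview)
Your proposal is correct and follows essentially the same route as the paper: apply Lemma~\ref{lemma:variational_bound_general} with a point-mass-on-$\boldsymbol{\tau}'$ choice of $Q$, evaluate the conditional KL via Lemma~\ref{lemma:KL_Q_p0}, add the bound from Lemma~\ref{lemma:Lq_minus_Ldyn}, and optimize over the $\epsilon_t^2$. Your explicit regrouping $\sum_{j:k(j)=t}(\tau_{j+1}-\tau_j)=\tau'_{t+1}-\tau'_t$ is exactly what the paper does implicitly, and your treatment of the boundary index $t=|\boldsymbol{\tau}'|$ (where the KL contributes only a single $d/(2\delta^2)$, not a doubled one) is consistent with the theorem statement, which absorbs that case into the uniform $2/\delta^2$ form at the cost of a slightly looser constant.
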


\begin{proof}
	By combining Lemmas \ref{lemma:variational_bound_general}, \ref{lemma:KL_Q_p0} and \ref{lemma:Lq_minus_Ldyn} we obtain the following upper bound
	\begin{equation}
	\begin{aligned}
	\Lbf-\Ldyntau
	&\leq
	\frac{d}{2}\epsilon_1^2 + \frac{1}{2} \left(\tilde{\theta}_1 - \thetainit\right)^\top \Sigmainit^{-1} \left(\tilde{\theta}_1 - \thetainit\right)
	-d\log \epsilon_1 - \frac{d}{2}|\boldsymbol{\tau}'| + (|\boldsymbol{\tau}'| - 1) d \log \delta \\
	& + \sum_{t=2}^{|\boldsymbol{\tau}'|} \left[ \frac{1}{2\delta^2} \left(d\left(\epsilon_{t-1}^2 + \epsilon_{t}^2\right) + \left(\tilde{\theta}_{\tau'_t} - \tilde{\theta}_{\tau'_{t-1}}\right)^\top \Sigmainit^{-1} \left(\tilde{\theta}_{\tau'_t} - \tilde{\theta}_{\tau'_{t-1}}\right)\right) - d\log \epsilon_t\right] \\
	&  - \log p_0(\boldsymbol{\tau}')
	+ \frac{1}{2} c nR^{2} \sum_{j=1}^{|\boldsymbol{\tau}|} (\tau_{j + 1} - \tau_{j})
	\left( \epsilon_{k(j)}^2 \Tr(\Sigmainit) + \left\| \tilde{\theta}_{\tau'_{k(j)}}-\tilde{\theta}_{\tau_{j}} \right\|^{2} \right).
	\end{aligned}
	\label{eq:upper_bound_MarBLR_intermed}
	\end{equation}
	
	We minimize the upper bound with respect to $\left(\epsilon_j\right)_{j=1,2,\ldots,|\boldsymbol{\tau}'|}$.
	
	For $j=1$, $\epsilon_{1}$ only contributes to the above bound through the terms
	\begin{equation}
	\frac{1}{2}\left( d + \frac{d}{\delta^2} + cnR^2 \Tr(\Sigmainit)\left( \tau'_2 - \tau'_1 \right) \right) \epsilon_1^2 - d \log \epsilon_1.
	\label{eq:eps_1_term}
	\end{equation}
	
	For $j=2,...,|\tau'|-1$, $\epsilon_{j}$ only contributes to the bound through the terms
	\begin{equation}
	\frac{1}{2} \left[ \frac{2d}{\delta^2} + cnR^2 \Tr(\Sigmainit) \left(\tau'_{j+1} - \tau'_j\right) \right] \epsilon_j^2 - d \log \epsilon_j.
	\label{eq:eps_j_term}
	\end{equation}
	
	For $j=|\boldsymbol{\tau}'|$, $\epsilon_{|\boldsymbol{\tau}'|}$ only contributes to the bound through the terms
	\begin{equation}
	\frac{1}{2} \left[ \frac{d}{\delta^2} + cnR^2 \Tr(\Sigmainit) \left(\tau'_{|\boldsymbol{\tau}'|+1} - \tau'_{|\boldsymbol{\tau}'|}\right) \right] \epsilon_{|\boldsymbol{\tau}'|}^2 - d \log \epsilon_{|\boldsymbol{\tau}'|}.
	\label{eq:eps_tau_term}
	\end{equation}
	
	It follows that the upper bound is minimized for
	\begin{align}
	\epsilon_1^2 &= \frac{d}{d + \frac{d}{\delta^2} + cnR^2 \Tr(\Sigmainit)\left( \tau'_2 - \tau'_1 \right)}, \label{eq:min_eps_1} \\
	\epsilon_j^2 &= \frac{d}{\frac{2d}{\delta^2} + cnR^2 \Tr(\Sigmainit) \left(\tau'_{j+1} - \tau'_j\right) }, \quad\forall j \in \left\{ 2, 3, \ldots, |\boldsymbol{\tau}'|-1 \right\}, \label{eq:min_eps_j} \\
	\epsilon_{|\boldsymbol{\tau}'|} &= \frac{d}{\frac{d}{\delta^2} + cnR^2 \Tr(\Sigmainit) \left(\tau'_{|\boldsymbol{\tau}'|+1} - \tau'_{|\boldsymbol{\tau}'|}\right) }. \label{eq:min_eps_tau}
	\end{align}
	
	Note that the upper bound \eqref{eq:upper_bound_MarBLR_intermed} is a sum of the terms \eqref{eq:eps_1_term}, \eqref{eq:eps_j_term} repeated once for each $j \in \left\{ 2, 3, \ldots, |\boldsymbol{\tau}'|-1 \right\}$, \eqref{eq:eps_tau_term}, and the following remaining terms
	\begin{align*}
	&\frac{1}{2} \left(\tilde{\theta}_1 - \thetainit\right)^\top \Sigmainit^{-1} \left(\tilde{\theta}_1 - \thetainit\right)
	- \frac{d}{2}|\boldsymbol{\tau}'| + (|\boldsymbol{\tau}'| - 1) d \log \delta  - \log p_0(\boldsymbol{\tau}') \\
	& + \sum_{t=2}^{|\boldsymbol{\tau}'|} \frac{1}{2\delta^2} \left(\tilde{\theta}_{\tau'_t} - \tilde{\theta}_{\tau'_{t-1}}\right)^\top \Sigmainit^{-1} \left(\tilde{\theta}_{\tau'_t} - \tilde{\theta}_{\tau'_{t-1}}\right)
	+ \frac{1}{2} c nR^{2} \sum_{j=1}^{|\boldsymbol{\tau}|} (\tau_{j + 1} - \tau_{j})
	\left\| \tilde{\theta}_{\tau'_{k(j)}}-\tilde{\theta}_{\tau_{j}} \right\|^{2}.
	\end{align*}
	Plugging in \eqref{eq:min_eps_1}, \eqref{eq:min_eps_j} and \eqref{eq:min_eps_tau}, we get the desired bound.
\end{proof}

\begin{figure}
	\centering
	\begin{subfigure}{0.49\linewidth}
		\begin{center}
		\includegraphics[width=0.6\linewidth]{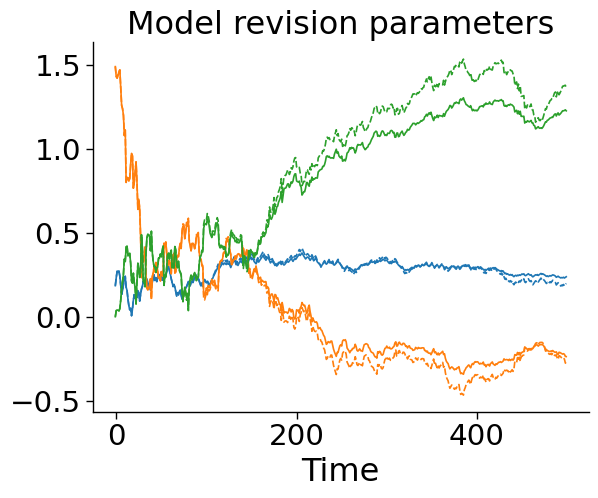}
		\end{center}
		\caption{\texttt{Initial Shift}, \texttt{All-Refit}}
		\label{fig:iid_good}
	\end{subfigure}
	\begin{subfigure}{0.49\linewidth}
		\includegraphics[width=0.6\linewidth]{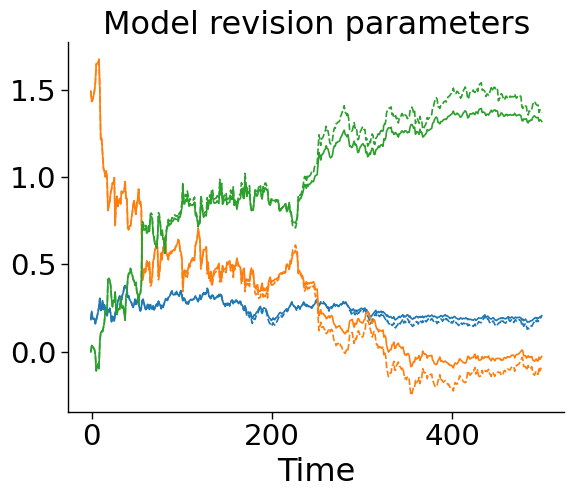}
		\caption{\texttt{Decay}, \texttt{All-Refit}}
		\label{fig:shift_good}
	\end{subfigure}
	
	\begin{subfigure}{0.49\linewidth}
	\begin{center}
		\includegraphics[width=0.6\linewidth]{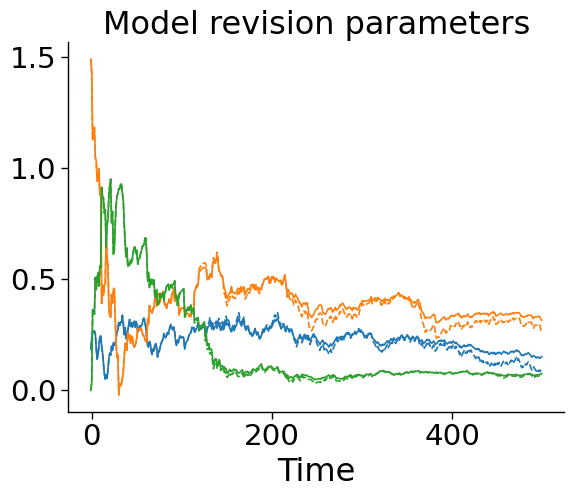}
	\end{center}
		\caption{\texttt{Initial Shift}, \texttt{Subset-Refit}}
		\label{fig:iid_bad}
	\end{subfigure}
	\begin{subfigure}{0.49\linewidth}
		\includegraphics[width=0.6\linewidth]{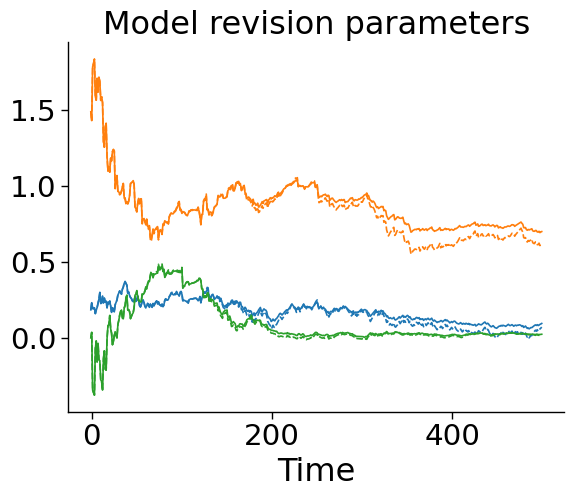}
		\includegraphics[width=0.35\linewidth]{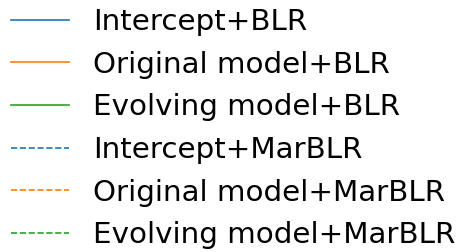}
		\caption{\texttt{Decay}, \texttt{Subset-Refit}}
		\label{fig:shift_bad}
	\end{subfigure}
	\caption{Evolution of the estimated intercepts and coefficients by BLR and MarBLR when combining the original model with an evolving prediction model (Scenario 3).
		Data is simulated to be stationary over time after an initial shift (\texttt{Initial Shift}) and nonstationary such that the original model decays in performance over time (\texttt{Decay}). 
		Underlying prediction model is updated by continually refitting on all previous data (\texttt{All-Refit}) or refit on the most recent subset of data (\texttt{Subset-Refit}).
	}
	\label{fig:refitting_thetas}
\end{figure}

\begin{figure}
	\centering
	\begin{subfigure}{\linewidth}
		\centering
		\includegraphics[width=0.5\linewidth]{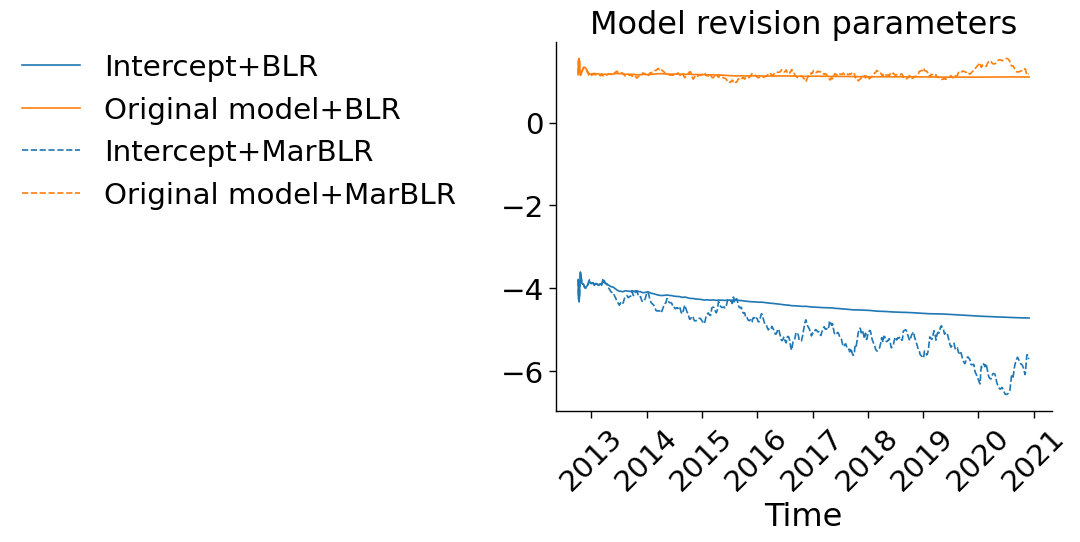}
		\caption{Online recalibration of a fixed prediction model}
		\label{fig:online_recalib_copd}
	\end{subfigure}	
	\begin{subfigure}{\linewidth}
		\centering
		\includegraphics[width=0.5\linewidth]{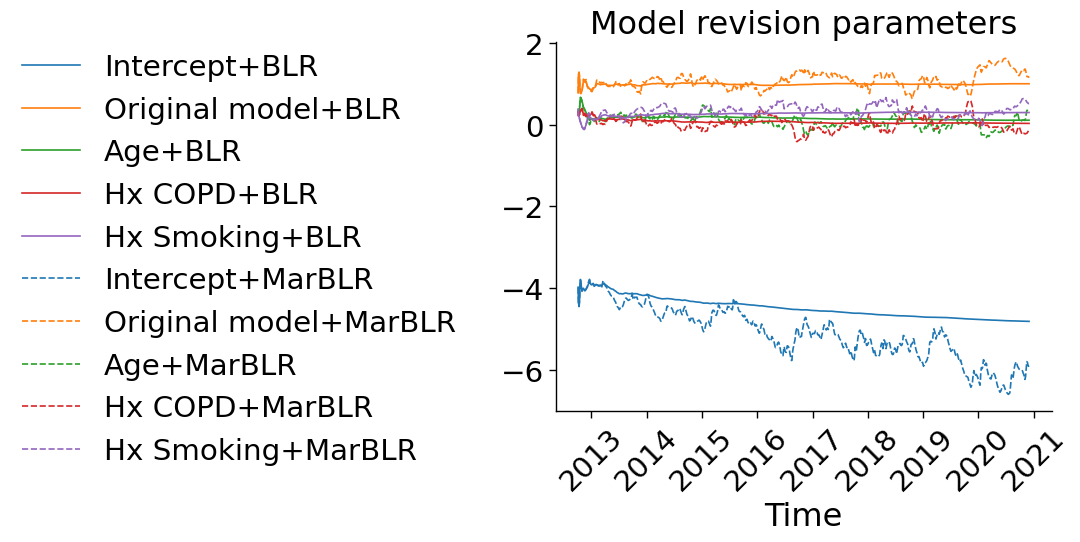}
		\caption{Online logistic revision with respect to a fixed prediction model and patient variables}
		\label{fig:online_linear_copd}
	\end{subfigure}	
	\begin{subfigure}{\linewidth}
		\centering
		\includegraphics[width=0.5\linewidth]{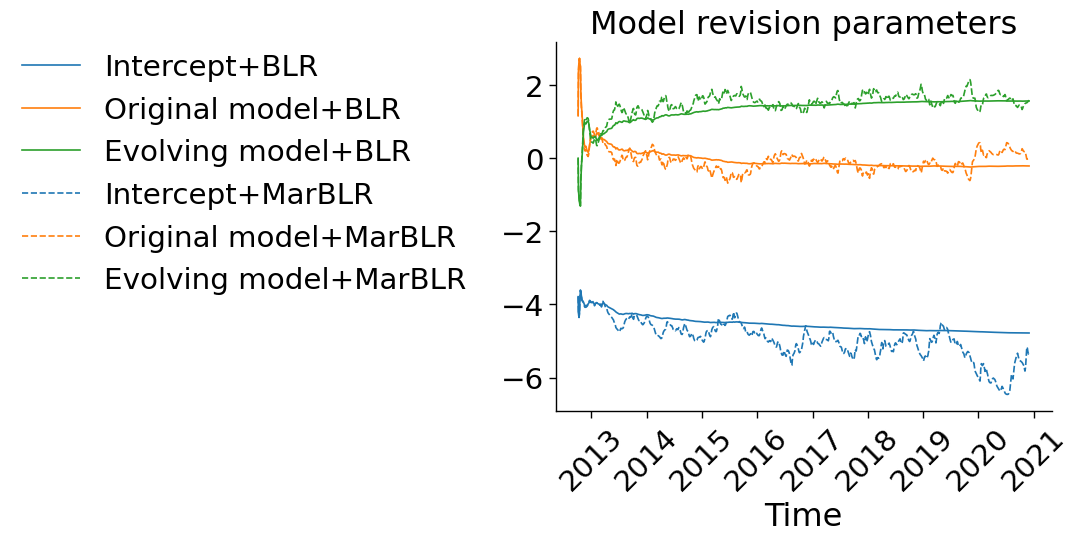}
		\caption{Online ensembling of the original and continually-refitted prediction models}
		\label{fig:online_refit_copd}
	\end{subfigure}
	\caption{Evolution of the estimated intercepts and coefficients for online recalibration and revision of a fixed COPD risk prediction model (a and b, respectively) and online reweighting for fixed and continually-refitted (evolving) COPD risk prediction models using BLR and MarBLR.
	}
	\label{fig:copd_thetas}
\end{figure}

\pagebreak

\bibliographystyle{plainnat}
\bibliography{main}

\end{document}